\theoremstyle{thmstyleone}%
\newtheorem{theorem}{Theorem}%
\newtheorem{proposition}[theorem]{Proposition}%
\theoremstyle{thmstyletwo}%
\newtheorem{example}{Example}%
\newtheorem{remark}{Remark}%
\theoremstyle{thmstylethree}%
\newtheorem{definition}{Definition}%
\definecolor{added}{rgb}{0,0.5,0} 
\renewcommand{\sout}[1]{}
\definecolor{added}{rgb}{0,0,0}
\renewcommand{\uline}[1]{#1}
\begin{document}

\title[Conformal Prediction and Scalable Classifiers]{Conformal Predictions for Probabilistically Robust \\ Scalable Machine Learning Classification }


\author*[1,4]{\fnm{Alberto} \sur{Carlevaro}}\email{alberto.carlevaro@ieiit.cnr.it}

\author[3]{\fnm{Teodoro} \sur{Alamo}}\email{talamo@us.es}

\author[2]{\fnm{Fabrizio} \sur{Dabbene}}\email{fabrizio.dabbene@cnr.it}

\author[1]{\fnm{Maurizio} \sur{Mongelli}}\email{maurizio.mongelli@ieiit.cnr.it}

\affil*[1]{\orgdiv{ Istituto di Elettronica e di Ingegneria
dell’Informazione e delle Telecomunicazioni}, \orgname{CNR}, \orgaddress{\street{Corso Ferdinando Maria Perrone, 24}, \city{Genoa}, \postcode{16152}, \country{Italy}}}

\affil[2]{\orgdiv{ Istituto di Elettronica e di Ingegneria
dell’Informazione e delle Telecomunicazioni}, \orgname{CNR}, \orgaddress{\street{Corso Duca degli Abruzzi, 24}, \city{Turin}, \postcode{10129}, \country{Italy}}}

\affil[3]{\orgdiv{Departamento de Ingenierìa
de Sistemas y Automàtica}, \orgname{Universidad de Sevilla, Escuela Superior de
Ingenieros}, \orgaddress{\street{Camino de los Descubrimientos}, \city{Seville}, \postcode{41092}, \state{Spain}}}

\affil[4]{\added{\orgdiv{Funded Research Department}, \orgname{Aitek SpA}, \orgaddress{\street{Via della Crocetta 15}, \city{Genoa}, \postcode{16122}, \state{Italy}}}}


\abstract{ Conformal predictions make it possible to define reliable and robust learning algorithms. But they are essentially a method for evaluating whether an algorithm is good enough to be used in practice. To define a reliable learning framework for classification from the very beginning of its design, the concept of scalable classifier was introduced to generalize the concept of classical classifier by linking it to statistical order theory and probabilistic learning theory. In this paper, we analyze the similarities between scalable classifiers and conformal predictions by introducing a new definition of a score function and defining a special set of input variables, the conformal safety set, which can identify patterns in the input space that satisfy the error coverage guarantee, i.e., that the probability of observing the wrong (possibly unsafe) label for points belonging to this set is bounded by a predefined $\varepsilon$ error level. We demonstrate the practical implications of this framework through an application in cybersecurity for identifying DNS tunneling attacks. Our work contributes to the development of probabilistically robust and reliable machine learning models.}

\keywords{Conformal predictions, Scalable classifiers, Confidence bounds, Robust AI}



\maketitle

\section{Introduction}\label{sec1}

\subsection{Context}
\added{Conformal} predictions (CPs) \citep{JMLR:v9:shafer08a} are gaining increasing importance in machine learning (ML) since they validate algorithms in terms of confidence of the prediction. Although it is a fairly recent field of study, there has been an astonishing production of scholarly papers, from the definition of new score functions to different methodologies for constructing conformal sets and, of course, a wide variety of applications. In fact, the ferment of scientific research in this field is so active that even the father of this theory, V. Vovk\footnote{V.\ Vovk. is the author of the groundbreaking book \emph{Algorithmic Learning in a Random World}, \citep{vovk2005algorithmic}, which is the foundation of the theory of conformal prediction. His scientific production is still at the top of research in the field, and a constantly updated list of papers on CP by Vovk and his colleagues can be found here \url{http://www.alrw.net/}. }, continues to actively contribute to the improvement of its knowledge, as in the case of \cite{vovk2017nonparametric} where he and his colleagues investigate the concept of validity under nonparametric hypotheses or the innovative introduction of Venn predictors as in \cite{vovk2022probabilistic}.
We refer the reader to the surveys \cite{gentleIntro,fontana2020conformal,toccaceli2022introduction} that largely cover all recent publications and discussions on uncertainty quantification (UQ) through CP for machine learning models.\\
Under canonical CP theory, the definition of a score function is very peculiar to either the classifier or the application at hand. For example, \cite{forreryd2018predicting} defines a special conformity measure (corresponding to a score function), based on the residual between the calibration points and the classification hyperplane of a SVM model. \added{Other example, always SVM-based, can be found in \cite{forreryd2018predicting}, \cite{JMLR:v9:shafer08a} and \cite{balasubramanian2009support}, where different definitions of score function (or conformity/non-conformity measure) are given. One of the strengths of our approach, as will become clear later, is the unique definition of such a score function, which, given \emph{any} classifier, allows the conformal prediction framework to be applied in the most natural way.} The work in \cite{narteni2023confiderai} defines a score function for rule-based models. The softmax function is used as score function in most image classification problems as in \cite{angelopoulos-sets,park2019pac,andeol2023conformal}, and many other examples can be provided (see, e.g., the above cited surveys). As a matter of the fact, those definitions come after the setting of the classifier and do not outline a common methodology. 

\subsection{Contribution}
By focusing on binary classification, our goal is to introduce to the CP community a way to link ML classifiers with a natural definition of score function \deleted{to embed}\added{that embeds} the conformal guarantee by construction.  \\

We exploit the concept of \emph{scalable classifiers} $f_\thB(\x,\rho)$ (Section \ref{sec:scalable}) introduced in \cite{carlevaro2023probabilistic} to develop a new class of score functions that rely on the geometry of the problem and that are naturally built from the classifier itself, by inheriting its properties (Section \ref{sec:score}). This allows CP theory to derive the relationship between the input space and the conformity guarantee explicitly. By introducing the new concept of \emph{conformal safety region}, we provide an analytical form of the specific subsets of the input space in which marginal coverage guarantees on prediction (Section \ref{sec:csr}) can be ensured. \deleted{The control of}\added{Controlling} the misclassification rate (either false positives or false negatives) naturally \deleted{arises, by posing}\added{follows from eliciting} the following quantities: the confidence level given by the conformal framework, the binary output $y\in\{+1,-1\}$, the confidence error $\varepsilon\in(0,1)$\added{,} as well as the new notion of conformal safety set $\mathcal{S}_\varepsilon$ that satisfies 
$$\Pr\{y = -1 \ \textrm{and} \ \x \in\mathcal{S}_\varepsilon\}\le \varepsilon.$$
In short, the paper defines a methodology in which the optimal shape of a classifier is derived, where the optimality criterion is embedded in the classifier by the conformal guarantee. \added{The proposed methodology thus  places itself in the recent and as yet unexplored field of set-value classification \cite{chzhen2021set}, a broad theory that studies predictors that have both good prediction properties and specific performance requirements, two points that underlie the proposed research.}\\

The remainder of the article is organized by providing a brief recall of the concepts of scalable classifiers and conformal prediction and then delving into the details of the definition of scalable score function and conformal safety region. The whole procedure is then validated on an application use case related to cyber-security for identifying DNS tunneling attacks (Section \ref{sec:DNS}).

\section{Background: Scalable Classifiers and Conformal Prediction}

The background of the theory we would like to propose in this research refers to a new interpretation of classical classification algorithms, scalable classifiers, and another rather new theory on \deleted{trusted}\added{trustworthy} AI, called conformal prediction. Both of these techniques belong to the field of reliable AI, searching for the definition of models, procedures or bounds that can make a learning algorithm probabilistically robust and reliable.

\subsection{Scalable Classifiers}
\label{sec:scalable}
\added{Given an input space $\mathcal{X}\subseteq\R^d$, $d\in\N^+$, and an output space $\mathcal{Y} = \{-1,+1\}$}, scalable classifiers (SCs) were introduced in \cite{carlevaro2023probabilistic} as a family of (binary) classifiers parameterized by a scale factor $\rho\in\R$
\begin{equation}
\label{eq:phi:minus}
    \phi_{\thB}(\x,\rho)  \doteq 
    \begin{cases}
        +1 \quad \quad \text{if } \, f_\thB(\x,\rho) < 0, \\
        -1 \quad \quad \text{otherwise.}
    \end{cases}
\end{equation}
where the function $f_\thB: \mathcal{X}\times\R \longrightarrow \R$ is the so-called \textit{classifier predictor} and the notation with subscript $\thB$ refers to the fact that the classifier also depends on a set of \emph{hyperparameters} $\thB=[\thB_1,\cdots,\thB_{n_{\thB}}]^\top$ to be set in the model (e.g.\ different choices of kernel,  regularization parameters, etc.). To give a meaningful interpretation of this classifier, we refer to the class $+1$ as a ``safe'' situation we want to target and the other class with $-1$ as an ``unsafe'' situation. Some examples might be differentiating between a patient's condition in developing or not developing a certain disease \citep{10.1371/journal.pone.0272825}, or understanding what input parameters lead an autonomous car to a collision or non-collision \citep{9787552}, among many other applications.  \\
SCs rely on the main assumption that for every $\x\in\mathcal{X}$, $f_\thB(\x,\rho)$ is continuous and monotonically increasing \added{in $\rho$}\added{,} and that $\lim\limits_{\rho\to -\infty}f_{\thB}(\x,\rho)<0<\lim\limits_{\rho\to \infty} f_{\thB}(\x,\rho)$, \cite[Assumption 1]{carlevaro2023probabilistic}. These assumptions imply that, 
there exists a unique solution $\bar{\rho}(\x)$ to the equation 
\begin{equation}
 f_\thB(\x,\textrm{\deleted{$\rho(\x)$}}\added{\rho})=0.
 \label{eq:barrho}
\end{equation}
The proof of this claim is available in \cite[Property 2]{carlevaro2023probabilistic}. 
In words, a scalable classifier is a classifier that satisfies some crucial properties: $i)$ given $\x$, there is always a value of $\rho$, denoted $\bar{\rho}(\x)$, that establishes the border between the two classes, $ii)$ the increase of $\rho$ forces the classifier to predict the $-1$ class and $iii)$ the target $+1$ class of a given feature vector $\x$ is maintained for a decrease of $\rho$. Moreover, \cite[Property 3]{carlevaro2023probabilistic} shows how any standard binary classifier can be \deleted{rendered}\added{made} scalable by simply including the scaling parameter $\rho$ in an additive way with the classifier predictor\deleted{, i.e.}\added{. That is,} given the function $\hat{f}:\mathcal{X}\longrightarrow\R$ and its corresponding classifier $\hat{\phi}(\x)$\deleted{$ = -\sign(\hat{f}(\x))$} then the function $f_\thB(\x,\rho) = \hat{f}(\x) + \rho$ provides the scalable classifier $\phi_\thB(\x,\rho)$\deleted{$ = -\sign(f(\x,\rho))$}. Thus, examples of classifiers that can be rendered scalable are support vector machine (SVM), support vector data description (SVDD), logistic regression (LR) but also artificial neural networks. More in detail, given a learning set 
\[
\mathcal{Z}_\ell\doteq\left\{\left(\x_i,y_i\right)\right\}_{i=1}^n \subseteq \mathcal{X}\times\left\{-1,+1\right\}
\]
containing observed feature points and corresponding labels\added{,} $\z_i=\left(\x_i,y_i\right)$\added{,} and assuming that $\varphi:\mathcal{X}\longrightarrow\mathcal{V}$ represents a \emph{feature map} (where $\mathcal{V}$ is an inner product space) that allows to exploit kernels, some examples of scalable classifier predictors are:
\bigskip
\begin{itemize}
    \item SVM: $f_\thB(\x,\rho) = \w^\top\varphi(\x) - b + \rho$,
    \bigskip
    \item SVDD: $f_{\thetaB}(\x,\rho) = \norm{\varphi(\x)-\w}^2 - R^2 + 
    \rho$,
    \bigskip
    \item LR: $f_{\thetaB}(\x,\rho) = \dfrac{1}{2}-\dfrac{1}{1+e^{\left(\w^\top\varphi(\x)-b\right) + \rho }}$,
\end{itemize}
\bigskip
where the classifier elements $\w,b$ and $R$ can be obtain\added{ed} as solution of proper optimization problems.  The interested reader can refer to \cite[Section~II~c]{carlevaro2023probabilistic}\deleted{)} for a more in depth discussion.
%
\\
Different values of the parameter $\rho$ correspond to different classifiers that can be considered as the level sets of the classifier predictor with respect to $\rho$. In particular, since we are interested in predicting the class $+1$ which, we recall, encodes a safety condition, we introduce
\begin{equation}
\mathcal{S}(\rho) = \set{\x\in\mathcal{X} }{f_\thB(\x,\rho)<0},
\label{eq:levelset}
\end{equation}
that is the set of points $\x\in\mathcal{X}$ predicted as safe by the classifier with the specific choice \added{of} $\rho$, i.e. the \emph{safety region} of the classifier $f_\thB$ for given $\rho$. It is easy to see that these sets are decreasingly nested with respect to $\rho$, i.e.
\begin{equation*}
\rho_1 > \rho_2 
\Longrightarrow \mathcal{S}(\rho_1) \subset \mathcal{S}(\rho_2). 
\end{equation*}
\subsection{Conformal Prediction}

Conformal Prediction is a relatively \deleted{new}\added{recent} framework developed \deleted{starting} in the late nineties \deleted{and early two thousand} by V.\ Vovk. We refer the reader to the surveys \cite{gentleIntro,JMLR:v9:shafer08a,fontana2020conformal} for a gentle introduction to this methodology. CP is mainly an a-posteriori verification of the designed classifier, and in practice returns a measure of its ``conformity'' to the calibration data.
We consider the particular implementation of CP discussed in \cite{gentleIntro}, relative to the so-called ``inductive" CP: in this setting, starting from a given predictor and a 
\textit{calibration} set, CP allows to construct a new predictor with given probabilistic guarantees.\\
To this end, the first key step is the definition of a 
\textit{score function} $s:\mathcal{X}\times\mathcal{Y} \longrightarrow \R$. Given a point $\x\in\mathcal{X}$ and a \textit{candidate} label $\y\in\{-1,1\}$, the score function returns a score $s(\x,\y).$ Larger scores encode worse agreement between point $\x$ and the candidate label $\y$. Then, assume to have available a second set of $n_c$ observations, usually referred to as \textit{calibration} set, defined as follows 
\begin{equation}
\label{eq:Zc}
    \mathcal{Z}_c \doteq \left\{(\x_i,y_i)\right\}_{i=1}^{n_c} = \mathcal{X}_c\times\mathcal{Y}_c\added{\subseteq \mathcal{X}\times\mathcal{Y}},
\end{equation} that are pairs of points $\x$ with their corresponding true label $y$.  \\
We assume that the observations $\x_i\in\mathcal{X}_c$ come from the same distribution $\Pr$ of the observations in the test set $\mathcal{Z}_{ts} = \{(\x_{i},y_i)\}_{i=1}^{n_{ts}} = \mathcal{X}_{ts}\times\mathcal{Y}_{ts}\added{\subseteq \mathcal{X}\times\mathcal{Y}}$. Additionally, CP requires that the data are \emph{exchangeable}, which is a weaker assumption than that of i.i.d.. Exchangeability means that the joint distribution of the data $\z_1, \z_2, \dots, \z_n$ is unchanged under permutations:
$$(\z_1,\z_2,\dots,\z_n) \sim (\z_{\sigma(1)},\z_{\sigma(2)}, \dots, \z_{\sigma(n)}), \ \textrm{for all permutations } \sigma. $$
Then, given a user-chosen confidence rate $(1-\varepsilon)\in(0,1)$, a \emph{conformal set} $C_\varepsilon(\x)$ is defined as the set of candidate labels whose score function is lower than the 
$(
\ceil{(n_c+1)(1-\varepsilon)}/n_c)
$-quantile, denoted as $s_\varepsilon$, computed on the $s_1 , \dots, s_{n_c}$ calibration scores. That is, to every point $\x$, CP associates a set of ``plausible labels"
$$C_\varepsilon(\x) = \set{\y\in\{-1,1\}}{s(\x,\y) \le s_\varepsilon}.$$
The usefulness of the conformal set is that, according with \cite{VovkGuarantee}, $C_\varepsilon(\x)$  possesses the so-called \emph{marginal conformal coverage guarantee} property, that is, given any (unseen before) observation $(\tilde \x,\tilde y)$, the following holds
\begin{equation}
\label{eq:marginal_cover}
\Pr\left\{\tilde y\in C_\varepsilon(\tilde{\x})\right\} \ge 1-\varepsilon.
\end{equation}
In other words, the true label $\tilde y$ belongs with high probability -- at least ($1-\varepsilon$) -- to the conformal set.

\section{Notion of Score Function for Scalable Classifiers and Conformal Safety Sets}
In this section we introduce two concepts: $i)$ a definition of score function for scalable classifiers (see Definition \ref{def:score_SCs}) and $ii)$ the notion of \emph{conformal safety region} (see Definition \ref{def:css}). 
\subsection{Natural definition of Score Function for Scalable Classifiers}
\label{sec:score}

In this paragraph, we show how scalable classifiers allow \added{for} a natural definition of the score function, based on their own classifier predictor.

\begin{definition}[Score Function for Scalable Classifier]
Given a scalable classifier $\phi_\theta(\x,\rho)$ with classifier predictor $f_\theta(\x,\rho)$, \added{given a point $\x$ and an associated} \deleted{and a} candidate label $\y$, the score function associated to the scalable classifier is defined as
$$s(\x,\y) = -\y\bar{\rho}(\x)$$
with $\bar{\rho}(\x)$ such that $f_\thB(\x,\bar\rho(\x))=0$.
\label{def:score_SCs}
\end{definition}

We notice that, since $f_\thB$ is a SC predictor, the existence and uniqueness of such $\bar\rho(\x)$ is guaranteed (Section \ref{sec:scalable}) and consequently $s$ is well defined.

In practice, the score function evaluates how much it is necessary to vary the original \deleted{classifier} \added{classification boundary} $f_\thB(\x,0)$ such that the point $\x$ falls on the classification boundary of the new classifier $f_\thB(\x,\bar\rho(\x))$, starting from class $\y$. Alternatively, it is possible to think of the score function as a measure of the ``difficulty'' of making the classifier predict a certain class: very large values for \deleted{$\rho(\x)$}$\added{\bar\rho(\x)}$ imply that it is difficult to render $f_\thB(\x,$\deleted{$\rho(\x)$}$\added{\bar\rho(\x)})$ positive, or equivalently that the class $-1$ is not conformal (thus, when $\hat{y}=-1$, the score function is \deleted{$\rho(\x)$}$\added{\bar\rho(\x)}=-\hat{y}$\deleted{$\rho(\x)$}$\added{\bar\rho(\x)}$). Very negative values of \deleted{$\rho(\x)$}$\added{\bar\rho(\x)}$ imply that it is difficult to render the output equal to $+1$, thus the score function is in this case $-$\deleted{$\rho(\x)$}$\added{\bar\rho(\x)} = -\hat{y}$\deleted{$\rho(\x)$}$\added{\bar\rho(\x)}$\deleted{)}.
  
\bigskip

\begin{example}
Scalable SVDD is the most straightforward example of correctly understanding such a definition for score function. In this case the score function takes this form
$$s(\x,\y) = -\y\left(R^2-\norm{\w-\varphi(\x)}^2\right).$$
\deleted{It is}\added{This represents} exactly the quantity that needs to be removed ($\y=+1$ for the point inside the sphere, $\norm{\w-\varphi(\x)}^2-R^2 < 0$) or added ($\y=-1$ for the point outside the sphere, $R^2-\norm{\w-\varphi(\x)}^2 > 0$) to the radius such that $\x$ falls on the boundary of the classifier.\\
For example, consider two classes of points, ``safe'' ($+1$, in blue in the following figure) and ``unsafe'' ($-1$, in red in the following figure) sampled from two two-dimensional Gaussian distributions with respectively means and covariance matrices
$$\mu_\mathrm{S} = \begin{bmatrix}
-1 \\
-1
\end{bmatrix}, \, \sigma_\mathrm{S} = \frac{1}{2}\mathrm{I} \, \, ; \, \,
\mu_\mathrm{U} = \begin{bmatrix}
+1 \\
+1
\end{bmatrix}, \, \sigma_\mathrm{U} = \frac{1}{2}\mathrm{I} $$
where $\mathrm{I}$ is the identity matrix. We trained a linear SVDD classifier (Figure \ref{fig:ex1a}) and plotted the respectively score function (Figure \ref{fig:ex1b}). Exactly the behavior described above can be observed: the score function associates values according to the geometry provided by the classifier. In this case, points belonging to the boundary of the circumference have score function values of 0 (dashed green line) and negative or positive depending on whether the point is inside or outside the circumference. It is worth noting that the classifier can be interpreted as a level set of the score function, and this interpretation is crucial as will become clear in the following.
\begin{figure}[!h]
  \subfigure[Linear SVDD classifier]{%
    \includegraphics[width=0.5\textwidth]{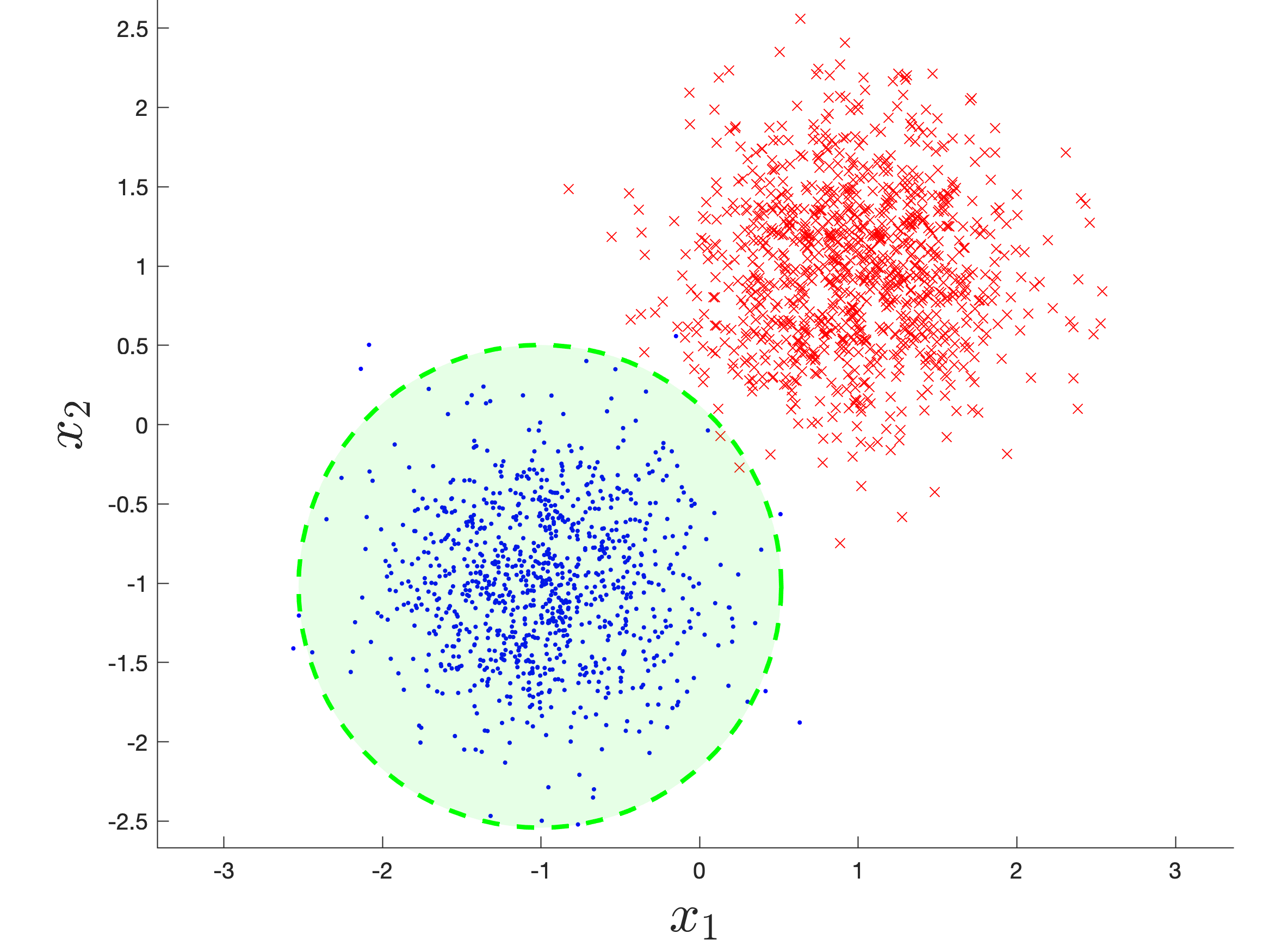} 
    \label{fig:ex1a}
  }
  \subfigure[Linear SVDD score function]{%
    \includegraphics[width=0.5\textwidth]{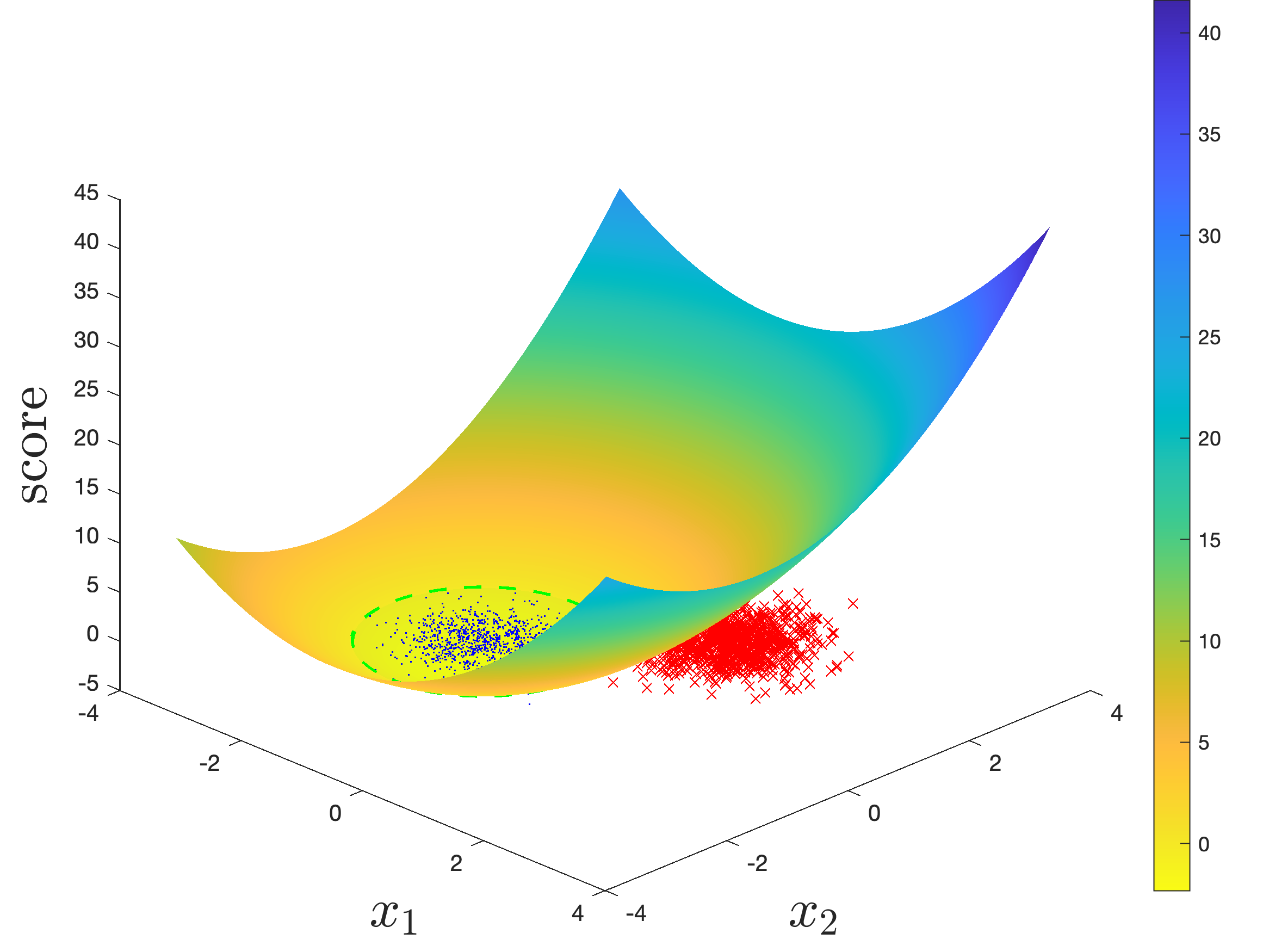} 
    \label{fig:ex1b}
  }
  \caption{Relationship between the SVDD classifier and the corresponding score function: the absolute value of the score function assigns to a sample its distance to the circumference boundary. \added{The color bar on the right helps to understand the behavior of the score function: darker colors indicate regions with less conformity with the target class, warmer the opposite. The zero value of the score function is obtained exactly on the boundary. }}
  \label{fig:example1}
\end{figure}
\label{ex:1}
\end{example}
\subsection{Conformal Safety Regions}
\label{sec:csr}

Classical CPs define subsets of the output space that satisfy the probabilistic marginal coverage constraint, but it is equally important to understand the relationship between the input space and the conformal sets. In other words, it would be meaningful to define regions in the input space classified on the basis of the conformal set of their samples to identify for which inputs the classifier is most reliable in making a certain prediction.
For example, one should be interested in finding the region of classification uncertainty ($C_\varepsilon(\x)=\{-1,+1\}$) or the region in which the conformal classifier predicts a specific label ($C_\varepsilon(\x)=\{+1\}$ or $C_\varepsilon(\x)=\{-1\}$) or in which it has no guess at all ($C_\varepsilon(\x) = \varnothing$). 
\\
In particular, since the goal is to find the input values that bring the classification to a ``safe'' situation (i.e., in our notation, $y=+1$) with a certain level of confidence, we introduce the concept of \emph{conformal safety region}.
\begin{definition}[Conformal Safety Region]
\label{def:css}
Consider a calibration set $\mathcal{Z}_{c} = \{(\x_i,y_i)\}_{i=1}^{n_c}$ from the same data distribution of the test set $Z_{ts}$. Given a level of error $\varepsilon\in (0,1)$, a score function $s:\mathcal{X}\times\mathcal{Y}\longrightarrow\R$\added{,} and \deleted{the}\added{its} corresponding $(\lceil(n_c+1)(1-\varepsilon)\rceil/n_c)$-quantile $s_\varepsilon$ computed on the calibration set, the {\rm{conformal safety region (CSR) of level $\varepsilon$} }  is defined as follows
\begin{equation}
    \begin{split}
    \Sigma_\varepsilon = \set{\x\in\mathcal{X}}{ s(\x,+1)\le s_\varepsilon, \ s(\x,-1)>s_\varepsilon}.
    \end{split}
\end{equation}
\end{definition}

In words, a conformal safety region (CSR) is the subset of the input space where the conformal set is composed by only safe labels, $C_\varepsilon(\x)=\{+1\}$, which can be inferred directly from the definition. Note that the above definition is independent on the choice of the score function $s$. What we will prove in the next is that using the score function defined for SCs (Definition \ref{def:score_SCs}) it is possible to give an analytical form to $\Sigma_\varepsilon$.
%
%
\begin{example}
    Consider the same configuration as in Example \ref{ex:1} but with covariance matrices $\sigma_S = \sigma_U = I$ and with a probability to sample an outlier for each class $p_O = 0.1$. Consider the LR classifier and its corresponding score function
    $$s(\x,\y) = -\y(b-\w^\top\varphi(\x))\textrm{\deleted{.}}\added{,}$$
    \added{which is the same of the SVM since the solution of the equation $f_\thB(\x,\rho)=0$ is for both $\bar\rho(\x) = b-\w^\top\varphi(\x)$.}
    We trained on a training set composed by $3000$ samples the LR classifier with cubic polynomial kernel (Figure \ref{fig:ex2a}) and then we computed the score values on a calibration set of $5000$ samples. We computed the quantiles varying $\varepsilon$ ($0.05, 0.1$ and $0.5$) and we plotted (on a test set of $10000$ samples) the scatter of the points according to the conformal set. Green points belong to the CSR $\Sigma_\varepsilon$ and it is easily understandable that the smaller \deleted{is} $\varepsilon$ \added{is}, the smaller \deleted{is} $\Sigma_\varepsilon$. This behavior is in line with CP theory, since small values of $\varepsilon$ mean that the conformal prediction must be very precise, and this is achievable only if the classifier itself is ``very confident'' of assigning the true label to a sample. Also, it should be noted that the smaller $\varepsilon$ is, the larger the region of uncertainty for the conformal prediction ($C_\varepsilon(\x) =\{-1,+1\}$, in yellow in Figures \ref{fig:ex2a}, \ref{fig:ex2b}). Again, since for small $\varepsilon$ high levels of marginal coverage must be satisfied, conformal prediction tends to give both labels to a point when it is uncertain.  Contrarily, for high values of $\varepsilon$ (Figure \ref{fig:ex2c}) the conformal sets for uncertain points tend to be empty (in \deleted{purple}\added{black}) because the score is too high and no output meets the specifications to belong to $C_\epsilon$. Finally, it is worth noting that the regions into which the points scatter have a well-defined shape: as introduced in Example \ref{ex:1} and as will become clear in the next section, these regions correspond to level sets of the score function.   
\label{ex:2}
    \begin{figure}[!t]
\begin{center}
  \subfigure[LR classifier]{%
    \includegraphics[width=0.28\textwidth]{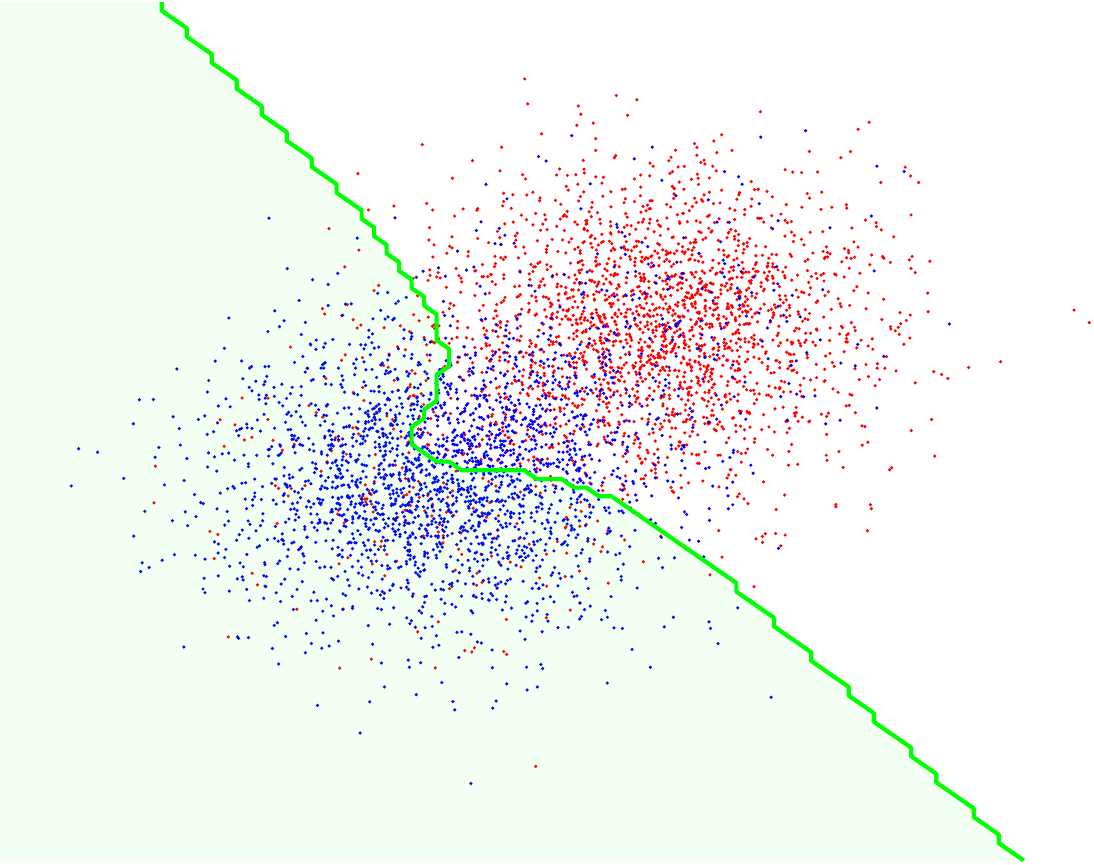} 
    \label{fig:ex2a}
  }
  \vfill
  \subfigure[$\varepsilon = 0.05$]{%
    \includegraphics[width=0.28\textwidth]{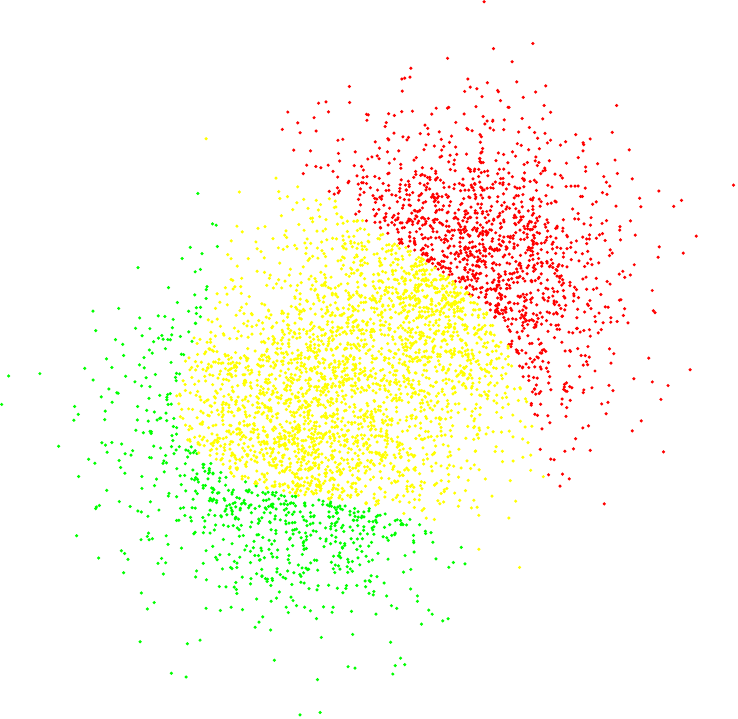} 
    \label{fig:ex2b}
  }
  \subfigure[$\varepsilon = 0.1$]{%
    \includegraphics[width=0.28\textwidth]{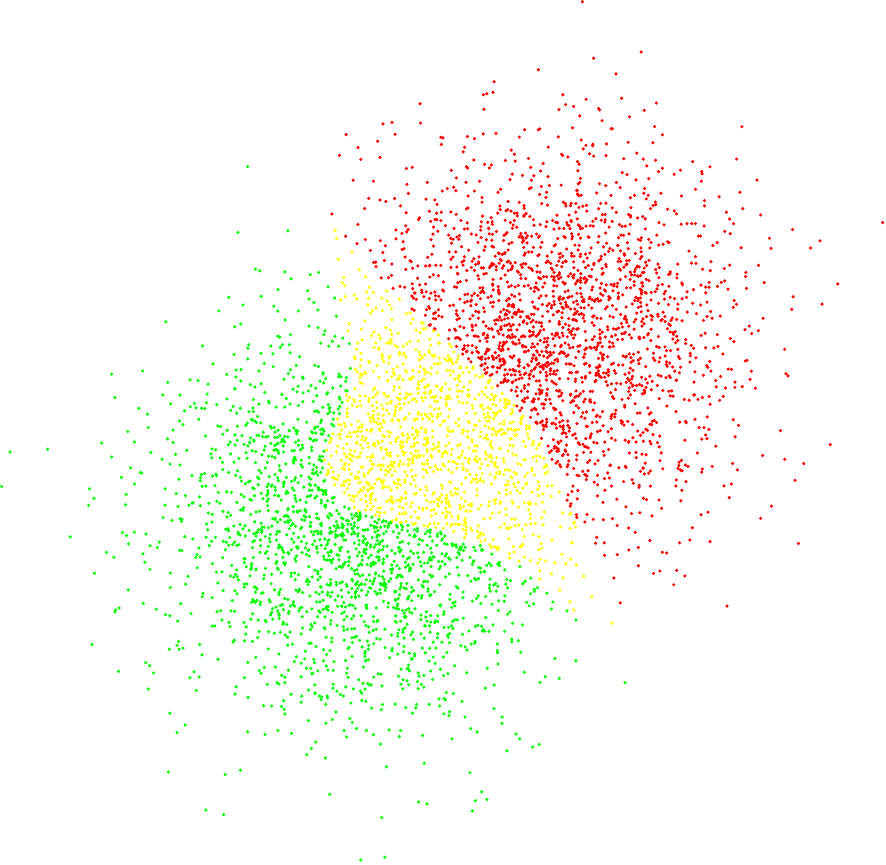} 
    \label{fig:ex2c}
  }
  \subfigure[$\varepsilon = 0.5$]{%
    \includegraphics[width=0.28\textwidth]{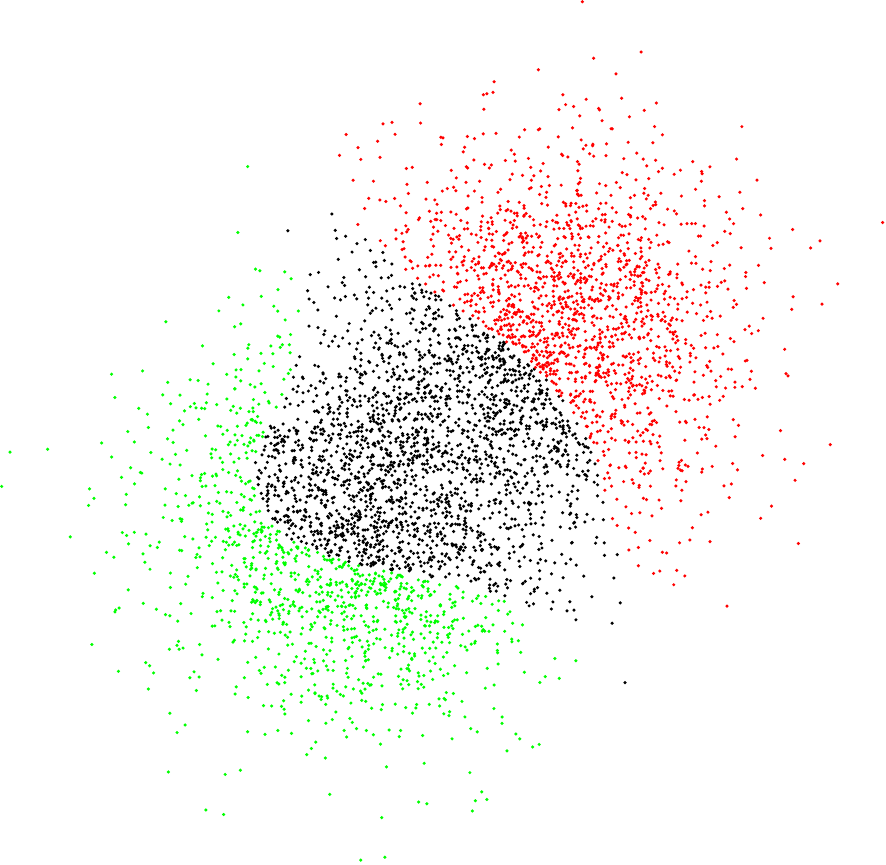} 
    \label{fig:ex2d}
  }
  \caption{Scatter-plots of the conformal set varying $\varepsilon$ for cubic LR. Green and red points correspond to singleton conformal set ($C_\varepsilon(\x)=\{+1\}$ and $C_\varepsilon(\x)=\{-1\}$ respectively) yellow points to double predictions ($C_\varepsilon(\x)=\{+1,-1\}$) and \deleted{purple}\added{black} points to empty prediction ($C_\varepsilon(\x)=\varnothing$).}
  \label{fig:example2}
\end{center}
\end{figure}
\end{example}

\subsection{Analytical Form of Conformal Safety Regions for Scalable Classifiers}

The definition of score we gave for SCs in Definition \ref{def:score_SCs} identifies a particular value of the scalable parameter, which is the one corresponding to the quantile $s_\varepsilon$, that we can define formally as
\begin{equation}
    \label{eq:opt-radius}   \rho_\varepsilon\doteq|s_\varepsilon|.
\end{equation}
To this value, we can associate a level set $\mathcal{S}(\rho_\varepsilon)$
defined as in \eqref{eq:levelset}, i.e. the $\rho_\varepsilon$-safe set
\begin{equation}
    \mathcal{S}_\varepsilon =  \set{\x\in\mathcal{X} }{f_\thB(\x,\rho_\varepsilon)<0}.
\end{equation} 
We can prove that non-trivial relationships link $\mathcal{S}_\varepsilon$ to the CSR $\Sigma_\varepsilon$. But before, let us split $\Sigma_\varepsilon$ in two contribution:
\begin{equation}
    \Sigma_\varepsilon = \Sigma_\varepsilon^a \cup \Sigma_\varepsilon^b, 
\end{equation}
where
\begin{equation}
\Sigma_\varepsilon^a=\set{\x\in\mathcal{X}}{s(\x,+1)<s_\varepsilon, s(\x,-1)>s_\varepsilon},
\end{equation}
\added{and}
\begin{equation}
\Sigma_\varepsilon^b=\set{\x\in\mathcal{X}}{s(\x,+1)=s_\varepsilon, s(\x,-1)>s_\varepsilon}.
\end{equation}
The relationship between $\mathcal{S}_\varepsilon$ and $\Sigma_\varepsilon$ is \deleted{made through}\added{explored in} the following results that provide as final and major contribution the fact that $\mathcal{S}_\varepsilon\subseteq\Sigma_\varepsilon$.
\begin{proposition}
    $$\mathcal{S}_\varepsilon = \Sigma_\varepsilon^a \subseteq \Sigma_\varepsilon.$$
    \label{prop:Sigmaa}
\end{proposition}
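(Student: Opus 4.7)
The plan is to unpack all three sets in terms of the scalar quantity $\bar\rho(\x)$ and then observe that they collapse to the same inequality.

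First I would substitute the score function from Definition \ref{def:score_SCs} into the definition of $\Sigma_\varepsilon^a$. Since $s(\x,+1) = -\bar\rho(\x)$ and $s(\x,-1) = \bar\rho(\x)$, the two conditions $s(\x,+1) < s_\varepsilon$ and $s(\x,-1) > s_\varepsilon$ become $\bar\rho(\x) > -s_\varepsilon$ and $\bar\rho(\x) > s_\varepsilon$ respectively. Taking them simultaneously is equivalent to
\[
\bar\rho(\x) > \max\{s_\varepsilon,-s_\varepsilon\} = |s_\varepsilon| = \rho_\varepsilon,
\]
so $\Sigma_\varepsilon^a = \{\x\in\mathcal{X} : \bar\rho(\x) > \rho_\varepsilon\}$.

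Next I would rewrite $\mathcal{S}_\varepsilon$ using the monotonicity assumption on $f_\thB(\x,\cdot)$. Recall from Section~\ref{sec:scalable} that for every $\x$, $f_\thB(\x,\rho)$ is continuous and strictly increasing in $\rho$, with $f_\thB(\x,\bar\rho(\x))=0$. Hence $f_\thB(\x,\rho_\varepsilon) < 0$ holds if and only if $\rho_\varepsilon < \bar\rho(\x)$, which gives $\mathcal{S}_\varepsilon = \{\x\in\mathcal{X} : \bar\rho(\x) > \rho_\varepsilon\}$. Comparing with the expression obtained for $\Sigma_\varepsilon^a$ yields the equality $\mathcal{S}_\varepsilon = \Sigma_\varepsilon^a$.

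Finally, the inclusion $\Sigma_\varepsilon^a \subseteq \Sigma_\varepsilon$ is immediate from the decomposition $\Sigma_\varepsilon = \Sigma_\varepsilon^a \cup \Sigma_\varepsilon^b$ introduced just before the proposition. The main conceptual point — and the only mildly delicate step — is handling $\rho_\varepsilon = |s_\varepsilon|$ without splitting into sign cases: it is important to notice that the pair of strict inequalities defining $\Sigma_\varepsilon^a$ symmetrizes $s_\varepsilon$ into its absolute value, which is exactly what makes the two set descriptions match regardless of the sign of the calibration quantile.
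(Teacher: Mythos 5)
Your proof is correct and follows essentially the same route as the paper's: both rely on $f_\thB(\x,\bar\rho(\x))=0$ together with monotonicity in $\rho$ to translate $f_\thB(\x,\rho_\varepsilon)<0$ into $\bar\rho(\x)>|s_\varepsilon|$, and on the observation that the pair of strict inequalities defining $\Sigma_\varepsilon^a$ is exactly $\bar\rho(\x)>\pm s_\varepsilon$, i.e.\ $\bar\rho(\x)>|s_\varepsilon|$. The only difference is presentational: you describe both sets by the common condition $\bar\rho(\x)>\rho_\varepsilon$ and compare, whereas the paper runs a single chain of equivalences from $\x\in\mathcal{S}_\varepsilon$ to $\x\in\Sigma_\varepsilon^a$.
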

\begin{proof}
    \begin{eqnarray*}
        \x\in\mathcal{S}_\varepsilon &\iff& f_{\boldsymbol{\theta}}(\x,|s_\varepsilon|)<0,\\
        &\iff& f_{\boldsymbol{\theta}}(\x,|s_\varepsilon|)<f_{\boldsymbol{\theta}}(\x,\bar\rho(\x)),\\
        &\iff& |s_\varepsilon|<\bar\rho(\x),\\
        &\iff&-s_\varepsilon<\bar\rho(\x) \ \textrm{and} \ s_\varepsilon < \bar\rho(\x),\\
        &\iff& -s_\varepsilon <-s(\x,+1) \ \textrm{and} \ s_\varepsilon<s(\x,-1),\\
        &\iff& s(\x,+1)<s_\varepsilon \ \textrm{and} \ s(\x,-1)>s_\varepsilon,\\
        &\iff& \x \in\Sigma_\varepsilon^a\subseteq\Sigma_\varepsilon.
    \end{eqnarray*}
\end{proof}
\begin{corollary}
    $$\mathcal{S}_\varepsilon = \Sigma_\varepsilon \ \ \textrm{only if} \ \ \Sigma_\varepsilon^b = \varnothing.$$
\label{cor:Sigmabvoid}
\end{corollary}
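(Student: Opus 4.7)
The plan is to exploit the decomposition $\Sigma_\varepsilon = \Sigma_\varepsilon^a \cup \Sigma_\varepsilon^b$ together with Proposition \ref{prop:Sigmaa}, which already gives the identification $\mathcal{S}_\varepsilon = \Sigma_\varepsilon^a$. Since the corollary is an ``only if'' statement, the target implication is $\mathcal{S}_\varepsilon = \Sigma_\varepsilon \Longrightarrow \Sigma_\varepsilon^b = \varnothing$, so I would argue contrapositively-in-spirit by direct set manipulation.

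First, I would observe the key disjointness $\Sigma_\varepsilon^a \cap \Sigma_\varepsilon^b = \varnothing$. This is immediate from the definitions: membership in $\Sigma_\varepsilon^a$ requires the strict inequality $s(\x,+1) < s_\varepsilon$, while membership in $\Sigma_\varepsilon^b$ requires the equality $s(\x,+1) = s_\varepsilon$, and these two conditions are mutually exclusive. Thus the union defining $\Sigma_\varepsilon$ is actually a disjoint union.

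Next, I would assume $\mathcal{S}_\varepsilon = \Sigma_\varepsilon$. Combining this with Proposition \ref{prop:Sigmaa}, which says $\mathcal{S}_\varepsilon = \Sigma_\varepsilon^a$, yields $\Sigma_\varepsilon^a = \Sigma_\varepsilon^a \cup \Sigma_\varepsilon^b$, and in particular $\Sigma_\varepsilon^b \subseteq \Sigma_\varepsilon^a$. Intersecting both sides with $\Sigma_\varepsilon^b$ and invoking the disjointness from the previous step gives $\Sigma_\varepsilon^b = \Sigma_\varepsilon^b \cap \Sigma_\varepsilon^a = \varnothing$, which is exactly the claim.

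There is no real obstacle here; the corollary is essentially a bookkeeping consequence of the exclusive strict/equality split in the definitions of $\Sigma_\varepsilon^a$ and $\Sigma_\varepsilon^b$. The only subtlety worth flagging explicitly in the write-up is that the union in the decomposition of $\Sigma_\varepsilon$ is disjoint; once that is in place, the implication follows in one line.
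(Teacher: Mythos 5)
Your proof is correct and follows essentially the same route as the paper: the paper's own argument simply writes $\Sigma_\varepsilon = \Sigma_\varepsilon^a \cup \Sigma_\varepsilon^b = \mathcal{S}_\varepsilon \cup \Sigma_\varepsilon^b$ (using Proposition \ref{prop:Sigmaa}) and declares the conclusion trivial. The only difference is that you make explicit the disjointness of $\Sigma_\varepsilon^a$ and $\Sigma_\varepsilon^b$ (strict inequality versus equality in $s(\x,+1)$), which is precisely the step the paper leaves implicit, so your write-up is a slightly more careful version of the same argument.
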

\begin{proof}
    Trivial, from 
    $$\Sigma_\varepsilon = \Sigma_\varepsilon^a\cup\Sigma_\varepsilon^b = \mathcal{S}_\varepsilon\cup\Sigma_\varepsilon^b.$$
\end{proof}
\begin{proposition}
    $$\Sigma_\varepsilon^b\neq\varnothing \Longrightarrow s_\varepsilon>0.$$
\label{prop:Sigmabnotvoid}
\end{proposition}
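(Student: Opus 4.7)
The plan is a short direct argument starting from a witness in $\Sigma_\varepsilon^b$.

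First, by non-emptiness, fix any $\x^* \in \Sigma_\varepsilon^b$; by definition, $\x^*$ simultaneously satisfies $s(\x^*, +1) = s_\varepsilon$ and $s(\x^*, -1) > s_\varepsilon$. Substituting the score formula $s(\x, \y) = -\y\,\bar\rho(\x)$ from Definition~\ref{def:score_SCs}, the equality unfolds to $\bar\rho(\x^*) = -s_\varepsilon$ while the strict inequality unfolds to $\bar\rho(\x^*) > s_\varepsilon$. Eliminating the intermediate value $\bar\rho(\x^*)$ between the two yields a strict scalar inequality involving $s_\varepsilon$ alone, from which the stated sign of $s_\varepsilon$ is read off in one line.

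The argument reduces to two substitutions followed by an elimination, so the main obstacle is entirely in the sign bookkeeping of the score formula: the factor $-\y$ flips the sign once (for the candidate label $\y = +1$) and leaves it alone (for $\y = -1$), which places the two resulting constraints on $\bar\rho(\x^*)$ on opposite sides of $s_\varepsilon$. Their simultaneous satisfiability is exactly what produces a strict separation between $s_\varepsilon$ and $-s_\varepsilon$ and thereby fixes the sign of $s_\varepsilon$. I would cross-check against the chain of equivalences already spelled out in the proof of Proposition~\ref{prop:Sigmaa}, which pins down the canonical identifications $s(\x, +1) = -\bar\rho(\x)$ and $s(\x, -1) = +\bar\rho(\x)$, to make sure the sign step lands on the conclusion asserted in the statement rather than its negation; this is the only place in the proof where a slip is easy to make and where verification is genuinely needed.
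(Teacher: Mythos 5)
Your strategy coincides with the paper's own: substitute $s(\x,\y)=-\y\,\bar\rho(\x)$ into the two defining conditions of $\Sigma_\varepsilon^b$ and eliminate $\bar\rho(\x^*)$. The problem is that the final step does not land where you say it does. From the printed definition $\Sigma_\varepsilon^b=\{\x\in\mathcal{X} : s(\x,+1)=s_\varepsilon,\ s(\x,-1)>s_\varepsilon\}$ you correctly extract $\bar\rho(\x^*)=-s_\varepsilon$ and $\bar\rho(\x^*)>s_\varepsilon$; eliminating $\bar\rho(\x^*)$ gives $-s_\varepsilon>s_\varepsilon$, i.e.\ $s_\varepsilon<0$, which is the \emph{negation} of the proposition's conclusion. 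This is precisely the failure mode you flagged as ``the only place where a slip is easy to make,'' and it does occur: as written, your argument proves $s_\varepsilon<0$, not $s_\varepsilon>0$.

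The root cause lies in the paper rather than in your substitutions. The paper's own proof opens with ``$\x\in\Sigma_\varepsilon^b \iff s(\x,+1)=s_\varepsilon$ and $s(\x,-1)<s_\varepsilon$,'' silently reversing the second inequality relative to the stated definition of $\Sigma_\varepsilon^b$; only with that reversal does the chain terminate at $s_\varepsilon>0$. The reversal cannot be the intended definition, because the decomposition $\Sigma_\varepsilon=\Sigma_\varepsilon^a\cup\Sigma_\varepsilon^b$, on which Corollary~\ref{cor:Sigmabvoid} and Theorem~\ref{teo-main} rely, forces $\Sigma_\varepsilon^b$ to carry the condition $s(\x,-1)>s_\varepsilon$. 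The internally consistent version of the result is therefore $\Sigma_\varepsilon^b\neq\varnothing\Rightarrow s_\varepsilon<0$, with part~(ii) of Theorem~\ref{teo-main} correspondingly reading ``$\mathcal{S}_\varepsilon=\Sigma_\varepsilon$ if $s_\varepsilon\ge0$'': indeed, for $s_\varepsilon\ge0$ both sets reduce to $\{\x : \bar\rho(\x)>s_\varepsilon\}$, whereas for $s_\varepsilon<0$ they can differ on the level set $\{\x : \bar\rho(\x)=-s_\varepsilon\}$ (this is also what the paper's own numerical example, with a positive quantile and coinciding sets, exhibits). In short: your elimination is sound and mirrors the paper's method, but carried out against the paper's actual definition it establishes the sign opposite to the one asserted; you should either prove $s_\varepsilon<0$ or first repair the definition/statement mismatch.
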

\begin{proof}
    \begin{eqnarray}
        \x\in\Sigma_\varepsilon^b &\iff& s(\x,+1) = s_\varepsilon \ \textrm{and} \ s(\x,-1)<s_\varepsilon,\\
        &\iff&-\bar\rho(\x)=s_\varepsilon \ \textrm{and} \ \bar\rho(\x)<s_\varepsilon,\\  
        &\iff& -s_\varepsilon<s_\varepsilon,\\
        &\iff& s_\varepsilon>0.
        \end{eqnarray}
\end{proof}
We can then summarize all these information in a single theorem that defines the ``analytical form'' of the CSR, i.e. that it is possible to express $\Sigma_\varepsilon$ in terms of a single scalar parameter.
\begin{theorem}[Analytical Representation of the Conformal Safety Region via Scalable Classifiers]
\label{teo-main}
Consider the classifier \eqref{eq:phi:minus} and suppose that \cite[Assumption 1]{carlevaro2023probabilistic} holds and that $\Pr\{\x\in\mathcal{X}\} = 1$. Consider then a calibration set $\mathcal{Z}_c = \left\{(\x_i,y_i)\right\}_{i=1}^{n_c}$ ($n_c$ exchangeable samples), a level of error $\varepsilon\in(0,1)$, a score function $s:\mathcal{X}\times\mathcal{Y}\longrightarrow\R$ as in Definition \ref{def:score_SCs} with $\lceil(n_c+1)(1-\varepsilon)\rceil/n_c$-quantile $s_\varepsilon$ computed on the calibration set\added{.} \deleted{and}\added{D}\deleted{d}efine the {\rm{conformal scaling of level}} $\varepsilon$ as follows
\begin{equation}
    \rho_\varepsilon = |s_\varepsilon|,
\end{equation}
and define the corresponding $\rho_\varepsilon$-safe set
\begin{equation}
    \mathcal{S}_\varepsilon =  \set{\x\in\mathcal{X} }{f_\thB(\x,\rho_\varepsilon)<0}.
\end{equation} 
Then, given the conformal safety region of level $\varepsilon$, $\Sigma_\varepsilon$, we have
\begin{enumerate}[i)]
    \item $\mathcal{S}_\varepsilon \subseteq \Sigma_\varepsilon$.
    \item $\mathcal{S}_\varepsilon = \Sigma_\varepsilon$ if $s_\varepsilon\le0$.
\end{enumerate}
that is, $\mathcal{S}_\varepsilon$ is a CSR.
\end{theorem}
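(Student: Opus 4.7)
The plan is to assemble the theorem directly from the three auxiliary results already proved, namely Proposition~\ref{prop:Sigmaa}, Corollary~\ref{cor:Sigmabvoid}, and Proposition~\ref{prop:Sigmabnotvoid}. All of the technical content (the use of Assumption~1 from \cite{carlevaro2023probabilistic} to guarantee existence and uniqueness of $\bar\rho(\x)$, together with the chain of equivalences translating $f_\thB(\x,|s_\varepsilon|)<0$ into a pair of inequalities on $s(\x,\pm 1)$) is packaged in those statements, so the theorem itself should be a short bookkeeping argument.

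For claim~$(i)$, my plan is simply to invoke Proposition~\ref{prop:Sigmaa}: it already asserts $\mathcal{S}_\varepsilon = \Sigma_\varepsilon^a \subseteq \Sigma_\varepsilon$, and the rightmost containment is exactly what is required. For claim~$(ii)$, the strategy is to rule out the boundary component $\Sigma_\varepsilon^b$ whenever $s_\varepsilon\le 0$: taking the contrapositive of Proposition~\ref{prop:Sigmabnotvoid} yields $\Sigma_\varepsilon^b = \varnothing$, and then the decomposition $\Sigma_\varepsilon = \Sigma_\varepsilon^a \cup \Sigma_\varepsilon^b$ (used in Corollary~\ref{cor:Sigmabvoid}) together with $\Sigma_\varepsilon^a = \mathcal{S}_\varepsilon$ from Proposition~\ref{prop:Sigmaa} collapses to $\Sigma_\varepsilon = \mathcal{S}_\varepsilon$.

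The final phrase ``$\mathcal{S}_\varepsilon$ is a CSR'' then follows by transferring the marginal coverage guarantee from $\Sigma_\varepsilon$: every $\x\in \mathcal{S}_\varepsilon$ has singleton conformal set $\{+1\}$, because the two inequalities characterizing $\Sigma_\varepsilon^a$ give $C_\varepsilon(\x)=\{+1\}$. Combined with the CP coverage property \eqref{eq:marginal_cover}, this means that the probability of seeing the unsafe label $-1$ on a point of $\mathcal{S}_\varepsilon$ is at most $\varepsilon$, which is precisely the safety interpretation announced in the introduction.

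The main obstacle here is not a technical one but a matter of careful wording: I want to make sure the hypothesis $\Pr\{\x\in\mathcal{X}\}=1$ is used where needed (to ensure that the probabilistic statements and the geometric level sets agree almost surely), and that the scalable-classifier hypothesis is invoked to justify writing $\rho_\varepsilon = |s_\varepsilon|$ and reading $\mathcal{S}_\varepsilon$ as the $\rho_\varepsilon$-level set. Once this is spelled out, the proof reduces to two one-line citations of the preceding results.
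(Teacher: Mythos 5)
Your proposal is correct and follows essentially the same route as the paper, whose proof of Theorem~\ref{teo-main} simply cites Propositions~\ref{prop:Sigmaa} and~\ref{prop:Sigmabnotvoid} and Corollary~\ref{cor:Sigmabvoid}; your argument just spells out the same bookkeeping (claim~$(i)$ from $\mathcal{S}_\varepsilon=\Sigma_\varepsilon^a\subseteq\Sigma_\varepsilon$, claim~$(ii)$ from the contrapositive of Proposition~\ref{prop:Sigmabnotvoid} together with the decomposition $\Sigma_\varepsilon=\Sigma_\varepsilon^a\cup\Sigma_\varepsilon^b$) in more detail.
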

\begin{proof}
    Proof follows directly from Propositions \eqref{prop:Sigmaa} and \eqref{prop:Sigmabnotvoid} and Corollary \eqref{cor:Sigmabvoid}.
\end{proof}
    \begin{figure*}[!t]
\begin{center}
\subfigure[CP sets scatter-plot.]{%
    \includegraphics[width=0.45\textwidth]{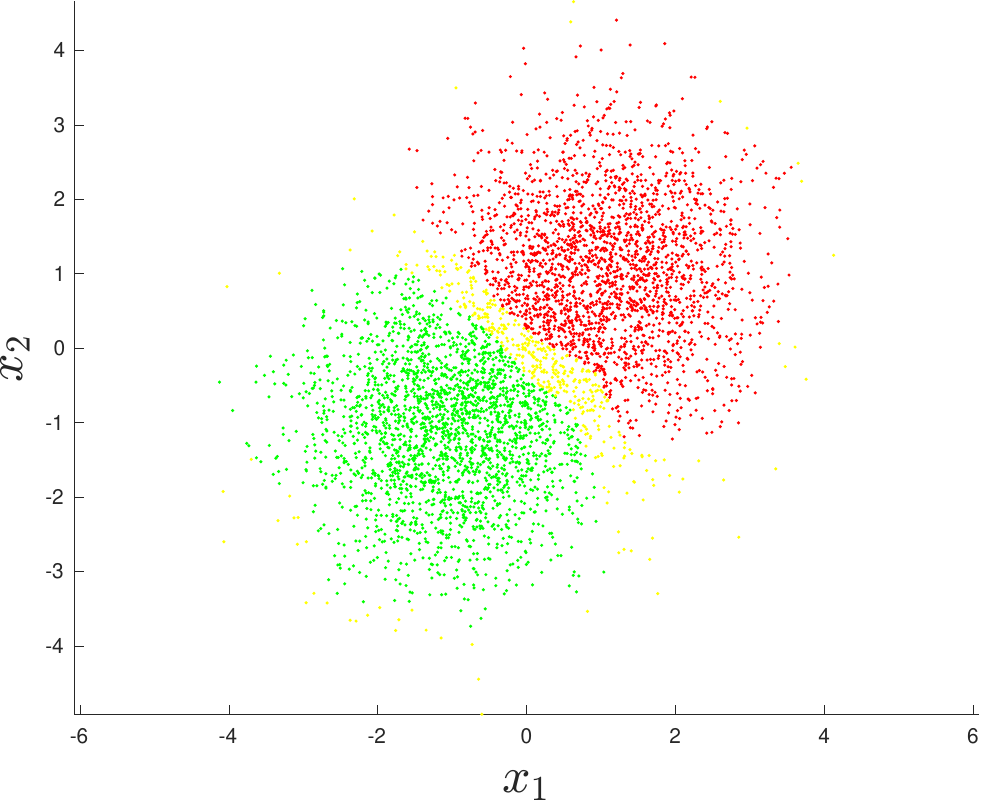} 
    \label{fig:ex3a}
  }
  \hfill
  \subfigure[CSR $\mathcal{S_\varepsilon
  }$.]{%
    \includegraphics[width=0.45\textwidth]{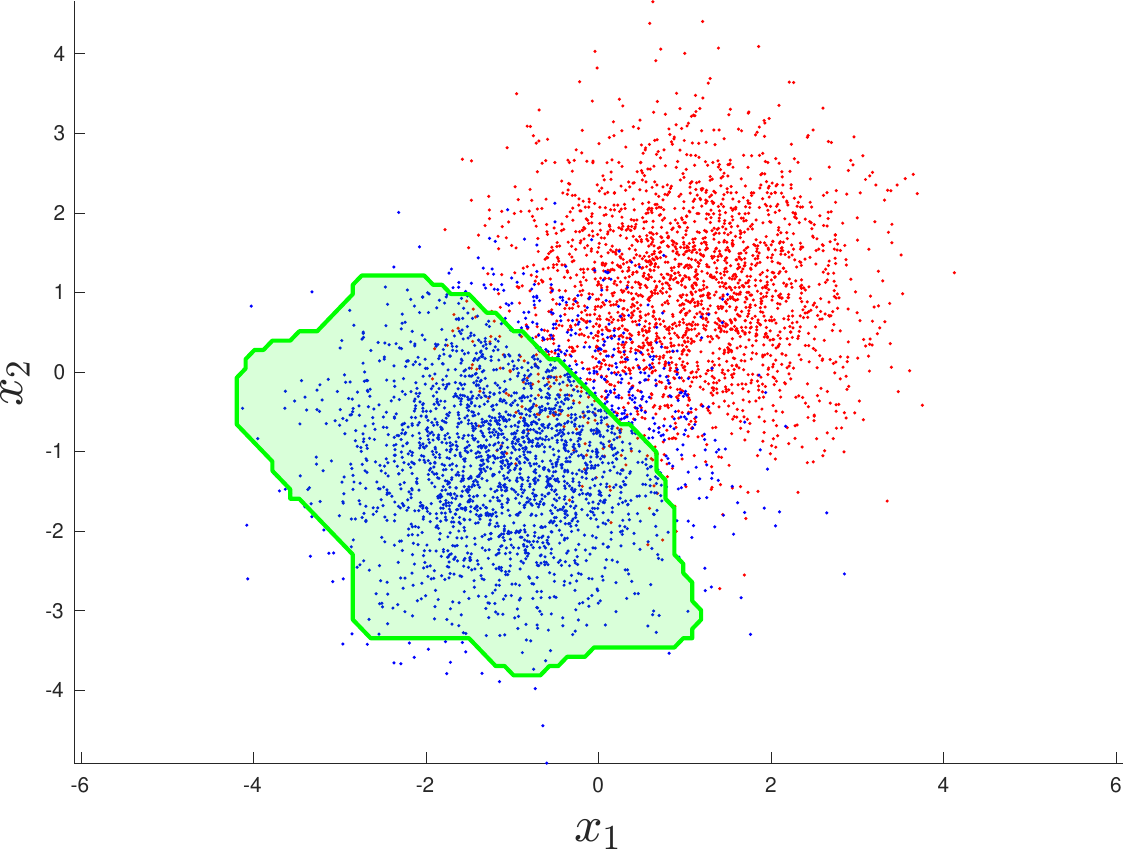} 
    \label{fig:ex3b}
  }
  \vfill
  \subfigure[Gaussian SVM score function.]{%
    \includegraphics[width=0.45\textwidth]{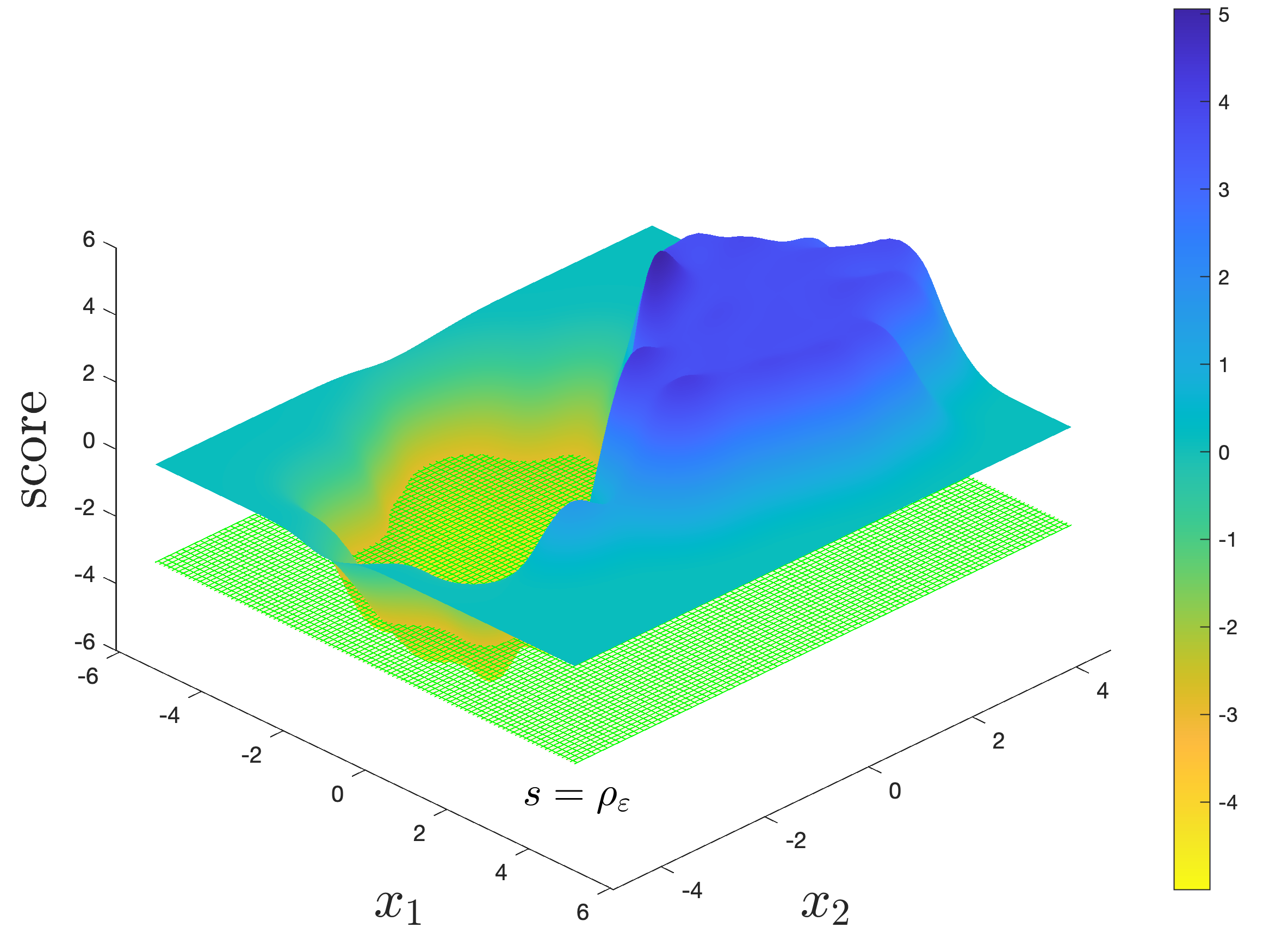} 
    \label{fig:ex3c}
  }
  \hfill
  \subfigure[$\textrm{``score''}=0$ plane projection.]{%
    \includegraphics[width=0.45\textwidth]{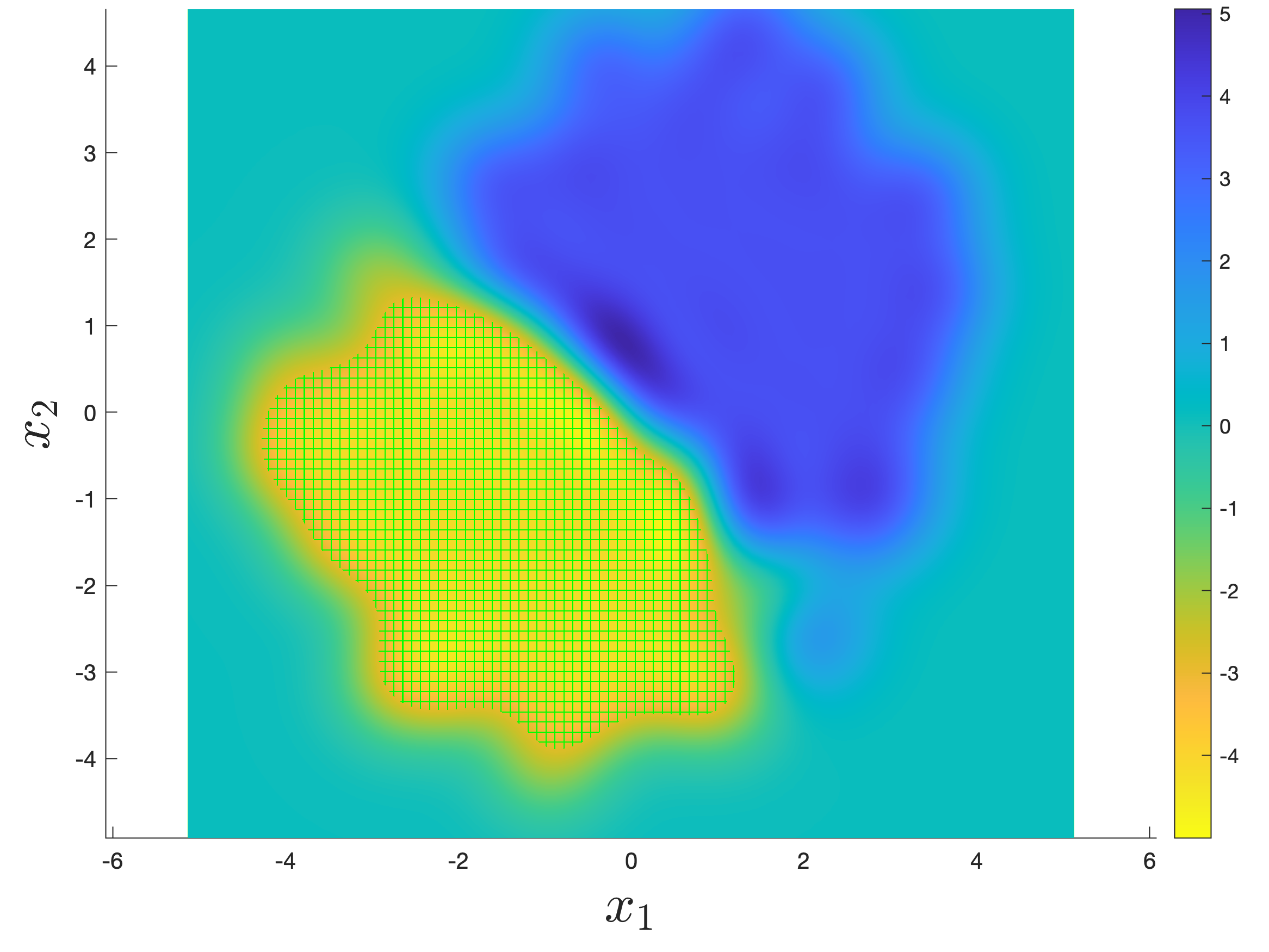} 
    \label{fig:ex3d}
  }
  \caption{CSR computed with a Gaussian SVM at $\varepsilon = 0.05$. Scattered CSR $\Sigma_\varepsilon$, \ref{fig:ex3a}, coincides with the analytical CSR $\mathcal{S}_\varepsilon$, \ref{fig:ex3b} that coincides with the level set $z = \rho_\varepsilon$ of the score function, \ref{fig:ex3c}. \added{Figure \ref{fig:ex3d} is the planar representation on ${x_1}-{x_2}$ plane of the score function.}}
  \label{fig:example3}
\end{center}
\end{figure*}
In its classical definition, conformal prediction is a local
property, \deleted{that is conformal coverage guarantee is valid
only punctually}\added{that is, the conformal coverage guarantee is valid only punctually}. However, conformal labels map each
point in a subset of the input space, depending on the size
of the respective conformal set. \deleted{In this sense Theorem \ref{teo-main} proves that it is possible
to identify a region in which the possibility of observing a
safe situation (in our encoding $y = +1$) is guaranteed.}\added{Theorem 3 then provides a new classifier that maps the samples contained in $\mathcal{S}_\varepsilon$ to the target class $+1$. Once $\rho_\varepsilon$ has been computed, it is then possible to write }
    \begin{equation*}
       \added{\mathcal{S}_\varepsilon = \phi_\thB(\cdot,\rho_\varepsilon)^{-1}(y=+1)},
    \end{equation*}
    \added{identifying a unique relationship between the target class of the classification and the CSR.}

\begin{example}
In the same configuration as in Example \ref{ex:2}, we trained a Gaussian SVM and calculated the score values on the calibration set. Figure \ref{fig:example3} shows exactly what Theorem \ref{teo-main} claims: CSRs are level sets of the score function that correspond to a specific quantile and thus to a specific confidence level. Specifically, in this example it is shown the CSR at level of confidence $1-0.05 = 0.95$ that results in a quantile equal to $2.8113$ and corresponding conformal scaling $\rho_{0.05} = -2.8113$. The hyperplane $\textrm{``score''}= -2.8113$ exactly cuts the score function at the level set corresponding to the CSR.
\end{example}

\begin{remark}[On the Usefulness of Conformal Safety Regions]
The introduction of the concept of CSR brings inevitably to understand how this instrument can be useful in practice. First of all it allows to identify reliable prediction regions and quantify uncertainty: in decision making problems where a certain amount of confidence in the prediction is required (like for example in medical applications) CSRs can suggest the best set of input features that guides the predictions reliably, minimizing the presence of misclassification samples. Moreover CSRs provide an interpretable way to understand the model's behavior in different regions of the input space. This can be useful for the model explanation and for possible improvements and corrections to the model. Finally, CSRs are very ``regulatory compliant'': in applications with regulatory requirements, CSRs ensure compliance by providing a clear understanding of where model's predictions are reliable.
\end{remark}
In addition, CSRs can provide strong information about the prediction of points belonging to them. Indeed, it can be proved that the number of false positives is limited by the $\varepsilon$ error.
\begin{theorem}
Consider the classifier \eqref{eq:phi:minus} and the corresponding CSR developed as in Theorem \ref{teo-main} with a level of error $\varepsilon\in(0,1)$. Then, it can be stated that
    \begin{equation}
    \Pr\left\{y=-1 \ \textrm{and} \ \x \in \mathcal{S}_\varepsilon\right\} \le \varepsilon.
    \label{idea_bella}
\end{equation}
\label{conjecture}
\end{theorem}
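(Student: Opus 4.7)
The plan is to reduce the statement to the marginal coverage guarantee \eqref{eq:marginal_cover} by chaining the set inclusions already available. Two facts do all the work: first, by Theorem \ref{teo-main}, $\mathcal{S}_\varepsilon\subseteq\Sigma_\varepsilon$; second, by the very definition of the conformal safety region $\Sigma_\varepsilon$ (Definition \ref{def:css}), every $\x\in\Sigma_\varepsilon$ has a singleton conformal set $C_\varepsilon(\x)=\{+1\}$, because the constraints $s(\x,+1)\le s_\varepsilon$ and $s(\x,-1)>s_\varepsilon$ force inclusion of the label $+1$ and exclusion of the label $-1$. Consequently the event $\{y=-1,\ \x\in\mathcal{S}_\varepsilon\}$ implies $-1=y\notin C_\varepsilon(\x)$, i.e.\ a miscoverage event for a fresh test sample.

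The concrete steps I would carry out, in order, are: (i) fix a fresh test observation $(\x,y)$ drawn from the same exchangeable distribution as the calibration set, so that \eqref{eq:marginal_cover} applies; (ii) use Theorem \ref{teo-main}, part i), to embed $\mathcal{S}_\varepsilon$ in $\Sigma_\varepsilon$; (iii) unfold the definition of $\Sigma_\varepsilon$ to conclude that on this set $C_\varepsilon(\x)=\{+1\}$, so the occurrence of $y=-1$ coincides with the miscoverage event $\{y\notin C_\varepsilon(\x)\}$; (iv) apply monotonicity of probability to obtain
\begin{equation*}
\Pr\{y=-1,\ \x\in\mathcal{S}_\varepsilon\}\le\Pr\{y=-1,\ \x\in\Sigma_\varepsilon\}\le\Pr\{y\notin C_\varepsilon(\x)\}\le\varepsilon,
\end{equation*}
where the last inequality is exactly the marginal conformal coverage guarantee \eqref{eq:marginal_cover}.

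There is essentially no heavy computation; the entire argument is a short set-inclusion chase once the bridge $\mathcal{S}_\varepsilon\subseteq\Sigma_\varepsilon$ is in place. The only point that asks for a bit of care is bookkeeping over the underlying probability law: both \eqref{idea_bella} and \eqref{eq:marginal_cover} must refer to the same joint distribution over the test sample and, implicitly, over the calibration draw that defines the quantile $s_\varepsilon$. This is precisely what the assumption that calibration and test data come from the same distribution (and are exchangeable) buys us, and it is already standing in Theorem \ref{teo-main}. No sharpness claim is needed, so the proof is complete at this level of generality.
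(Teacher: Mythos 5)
Your proposal is correct and follows essentially the same route as the paper's proof: embed $\mathcal{S}_\varepsilon$ in $\Sigma_\varepsilon$, note that on $\Sigma_\varepsilon$ the conformal set is $\{+1\}$ so the event $y=-1$ there is a miscoverage event, and conclude via the marginal coverage guarantee \eqref{eq:marginal_cover}. The remark on the probability law (joint over the calibration draw and the fresh test point) is a sensible clarification but does not change the argument.
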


\begin{proof}
    Since $\mathcal{S}_\varepsilon\subseteq\Sigma_\varepsilon $:
    \begin{equation*}
    \begin{split}
        \Pr\{y = -1 \ \textrm{and} \ \x\in\mathcal{S}_\varepsilon \} &\le \Pr\{y = -1 \ \textrm{and} \ \x\in\Sigma_\varepsilon \}\\
        &= \Pr\{y = -1 \ \textrm{and} \ \x\in\{\x : C_\varepsilon(\x)=\{+1\}\} \}\\
        &\le \Pr\{y = -1 \ \textrm{and} \ \x\in\{\x : -1 \notin C_\varepsilon(\x)\} \}\\
        &= \Pr\{y = -1 \ \textrm{and} \ y\notin C_\varepsilon(\x) \}\\
        & \le \Pr\{y \notin C_\varepsilon(\x) \} \le \varepsilon,
    \end{split}
    \end{equation*}
    where the last inequality holds for the marginal coverage property of CP \eqref{eq:marginal_cover}.
\end{proof}

%
%
The significance of this statement cannot be overstated, as it implies that thanks to CSRs, it becomes feasible to identify regions in feature space where the conformal coverage of the target class is assured. Consequently, these regions identify feature points with a high degree of certainty, thereby enhancing the reliability, trustworthiness, and robustness of (any) classification algorithm, especially with regard to safety considerations.
\added{Specifically, the final output of the proposed method is a region, $\mathcal{S}_\varepsilon$, in which with high probability the chance of finding the unwanted label is small (and thus as small as desired). This means that the scalable classifier together with the conforming prediction can handle the natural uncertainty arising both from the data (to the extent that the data are representative of the information they provide, i.e. \emph{aleatoric uncertainty}) and the model (to the extent that it is accurate in modeling, i.e. \emph{epistemic uncertainty}), providing ``safety'' sets that have a volume proportional to $\varepsilon$, i.e., to the confidence of the prediction \cite{hullermeier2021aleatoric}.This is very much in line with recent and ongoing literature in the field of geometric uncertainty quantification, as in \cite{sale2023volume} where the authors propose the idea of ``credal sets'' \cite{abellan2006disaggregated} that, as our CSR does, guarantee the correctness of the prediction bounding the input set in polytopes. In this regard, the idea of quantifying uncertainty through functions that give a measure of distance (such as the score function proposed here) is something that is sparking the UQ community, enabling future comparisons with other methods such as the ``second order UQ'' discussed in \citep{sale2023second}.  }
\begin{remark}[On the link with Probably Approximate Correct theory]
          \added{Probably approximate correct (PAC) learning is a theory developed in the 1980s by Leslie Valiant \cite{valiant2013probably} for quantifying uncertainty in learning processes, with a focus on the case of undersampled data. PAC learning has been used to define sets of predictions that can satisfy probabilistic guarantees with nonparametric probabilistic assumptions (see, for example, \cite{park2022pac}) with similarities with our (and in general with CP theory) approach. Specifically, PAC learning is a broad theory where it is possible to insert the research presented in this paper on uncertainty quantification of machine learning classifiers with conformal prediction. For example, the confidence bounds on which conformal prediction theory is based (and so is our research) are inherited from PAC learning theory.}  \added{As shown in }\cite[Prop 2a]{inductive}\added{, the concept of $(\varepsilon,\delta)$-validity (i.e. the marginal coverage guarantee of equation (5) together with the randomness of the calibration set) is a PAC style guarantee on the (inductive) conformal prediction. As reported in our previous work \cite{carlevaro2023probabilistic}, there are nontrivial relationships between the number of samples on the calibration set and probabilistic guarantees on prediction. All these relationships can be read into the PAC learning formalism, and future work will focus on this topic.} 
\end{remark}

In the next section\deleted{, illustrated in Figure \ref{fig:example6},} we report some numerical examples of Theorem \eqref{conjecture}\added{, see Figure \ref{fig:example6}}.

\section{A real world application: detection of SSH-DNS tunnelling}
\label{sec:DNS}
The dataset chosen for the example application deals with covert channel detection in cybersecurity \cite{aiello2015dns}. The aim is detecting the presence of secure shell domain name server (SSH-DNS) intruders by an aggregation-based monitoring that avoids packet inspection, in the presence of silent intruders and quick statistical fingerprints generation. By modulating the quantity of anomalous packets in the server, we are able to modulate the difficulty of the inherent supervised learning solution via canonical classification schemes (\cite{9594676, Ivan}). \\
Let $q$ and $a$ be the packet sizes of a \textit{query} and the corresponding \textit{answer}, respectively (what answer is related to a specific query can be understood from the packet identifier) and $Dt$ the time-interval intercurring between them.
The information vector is composed of the statistics (mean, variance, skewness and kurtosis) of $q, a $ and $Dt$ for a total number of 12 input features:
$$\x=[m_A, m_{Q}, m_{Dt}, v_A, v_{Q}, v_{Dt}, s_A, s_{Q}, s_{Dt}, k_A, k_{Q}, k_{Dt}],$$
and an overall size of $10000$ examples.
High-order statistics give a quantitative indication of the asymmetry (skewness) and heaviness of tails (kurtosis) of a probability distribution, helping to improve the detection inference. The output space $\mathcal{Y} = \{-1,+1\}$ is generated by associating each sample $\x$ with the label $-1$ when ``no tunnel'' is detected and $+1$ when ``tunnel'' is detected. In this sense, the idea of safety should be interpreted as an indication that the system has detected the presence of a ``tunnel'' or abnormal behavior, i.e., the system believes that there is a potential security threat or intrusion. This could trigger various security countermeasures, such as blocking incoming traffic or applying filters to the connection. 
\\
   \begin{figure*}[!t]
\begin{center}
\subfigure[Error plot SVM.]{%
    \includegraphics[width=0.31\textwidth]{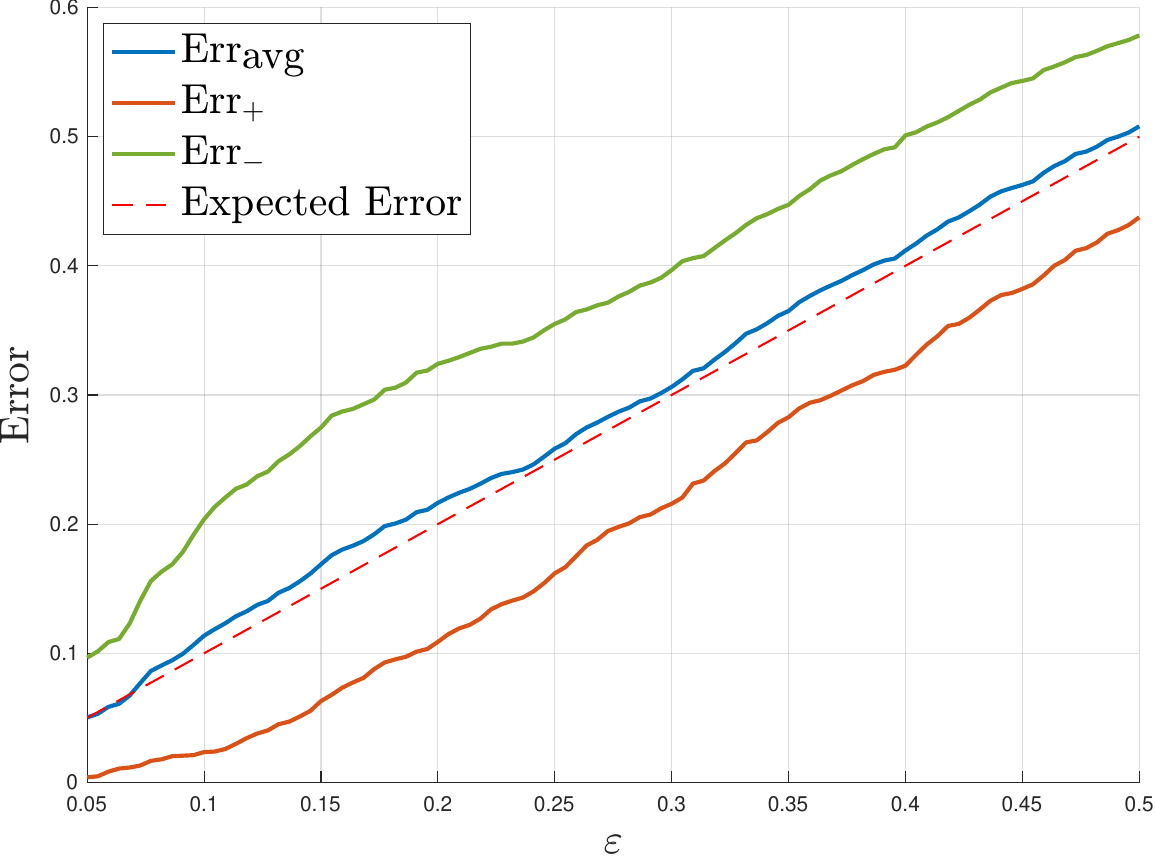} 
    \label{fig:ex4a}
  }
  \hfill
  \subfigure[Error plot SVDD.]{%
    \includegraphics[width=0.31\textwidth]{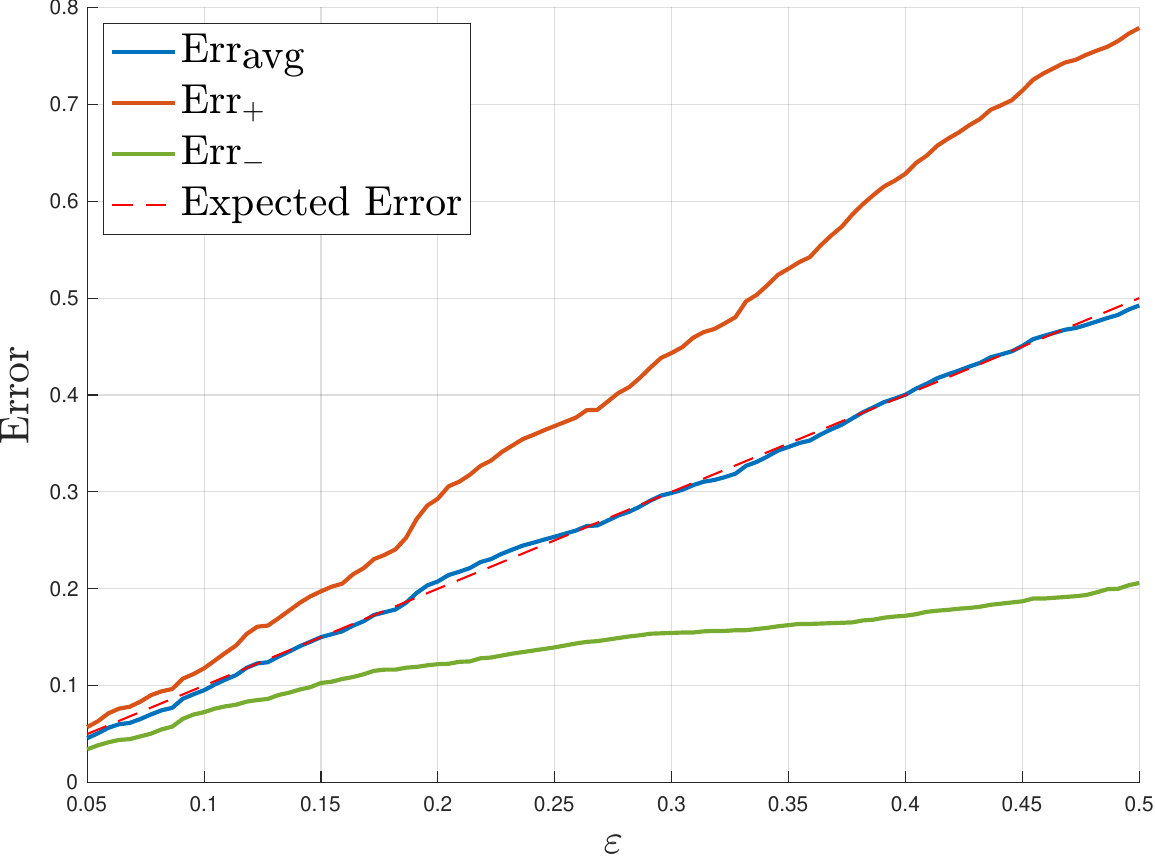} 
    \label{fig:ex4b}
  }
  \hfill
  \subfigure[Error plot LR.]{%
    \includegraphics[width=0.31\textwidth]{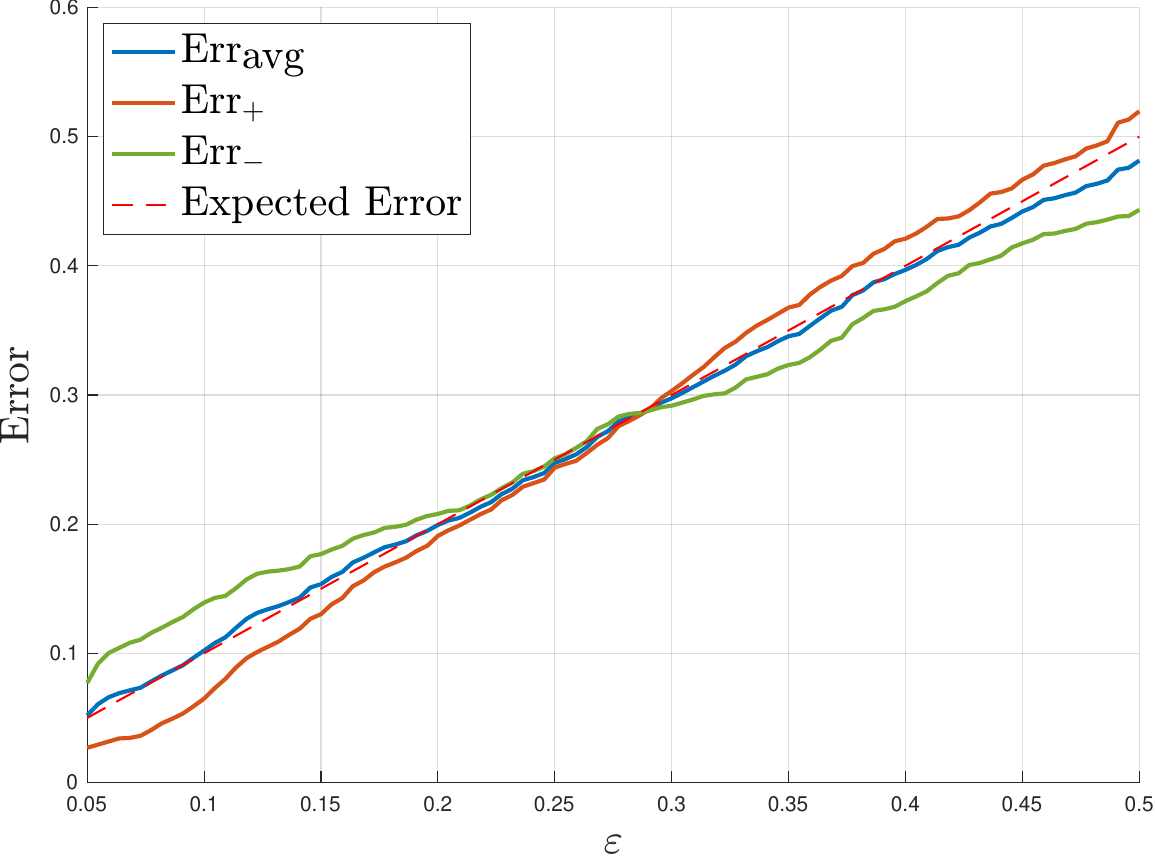} 
    \label{fig:ex4c}
  }
  \caption{Trend of the average error as $\varepsilon$ varies in $[0.05, 0.5]$ for different classifiers. \added{The errors vary in $[0,0.6]$ for SVM, $[0,0.8]$ for SVDD and $[0,0.6]$ for LR.} }
  \label{fig:example4}
\end{center}
\end{figure*}  

\begin{figure*}[!b]
\begin{center}
\subfigure[Size plot SVM.]{%
    \includegraphics[width=0.32\textwidth]{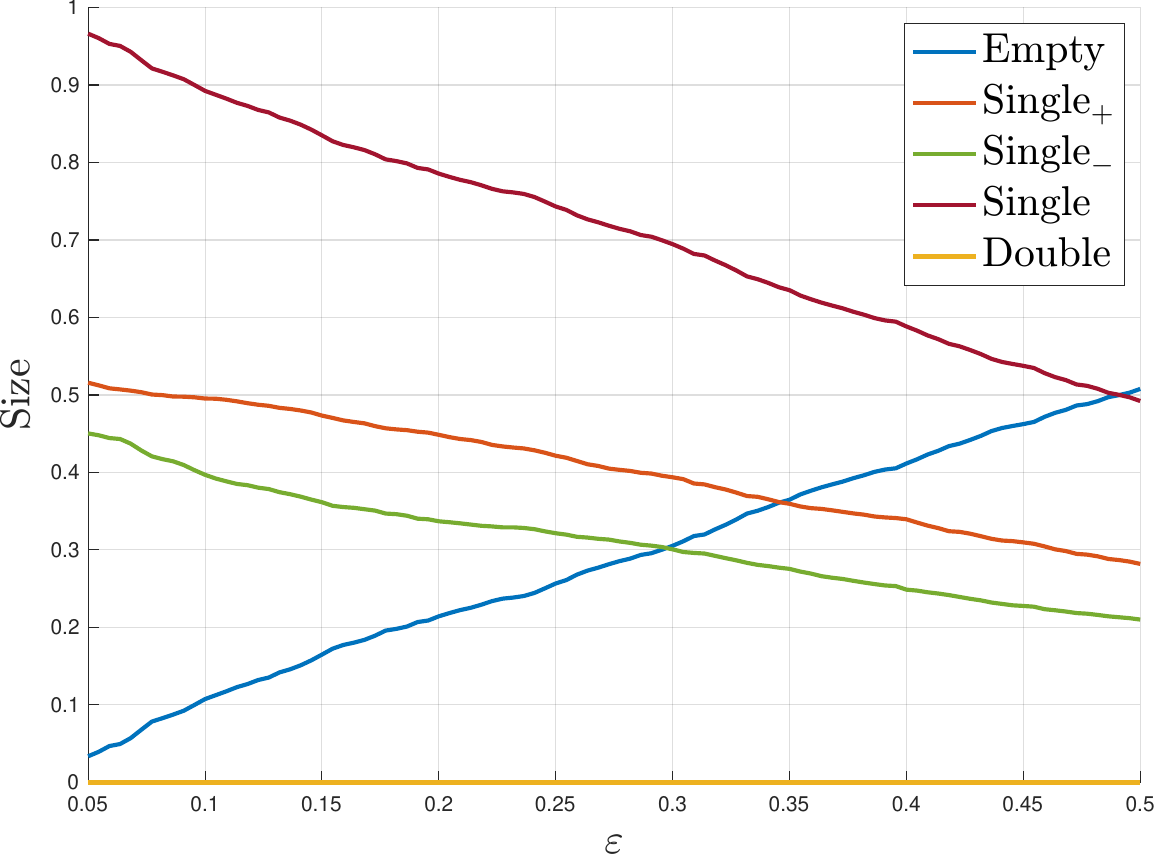} 
    \label{fig:ex5a}
  }
  \hfill
  \subfigure[Size plot SVDD.]{%
    \includegraphics[width=0.32\textwidth]{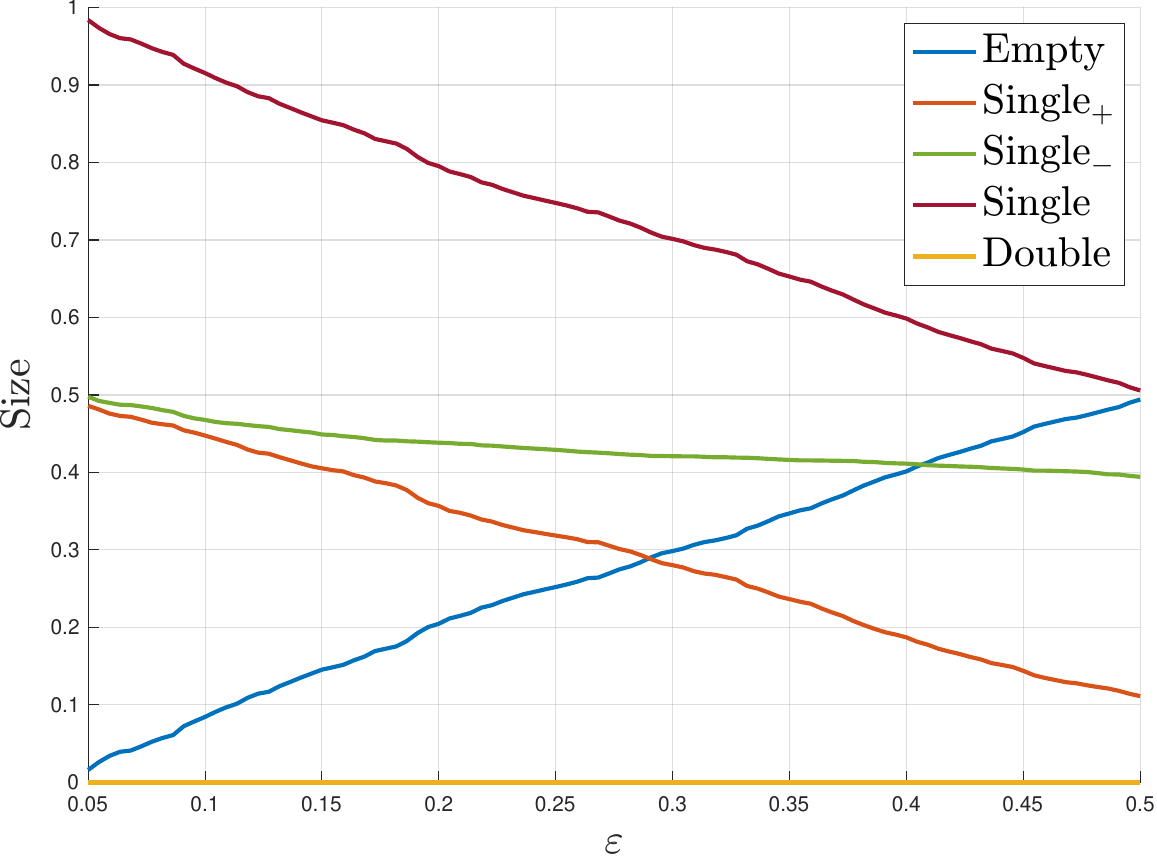} 
    \label{fig:ex5b}
  }
  \hfill
  \subfigure[Size plot LR.]{%
    \includegraphics[width=0.3\textwidth]{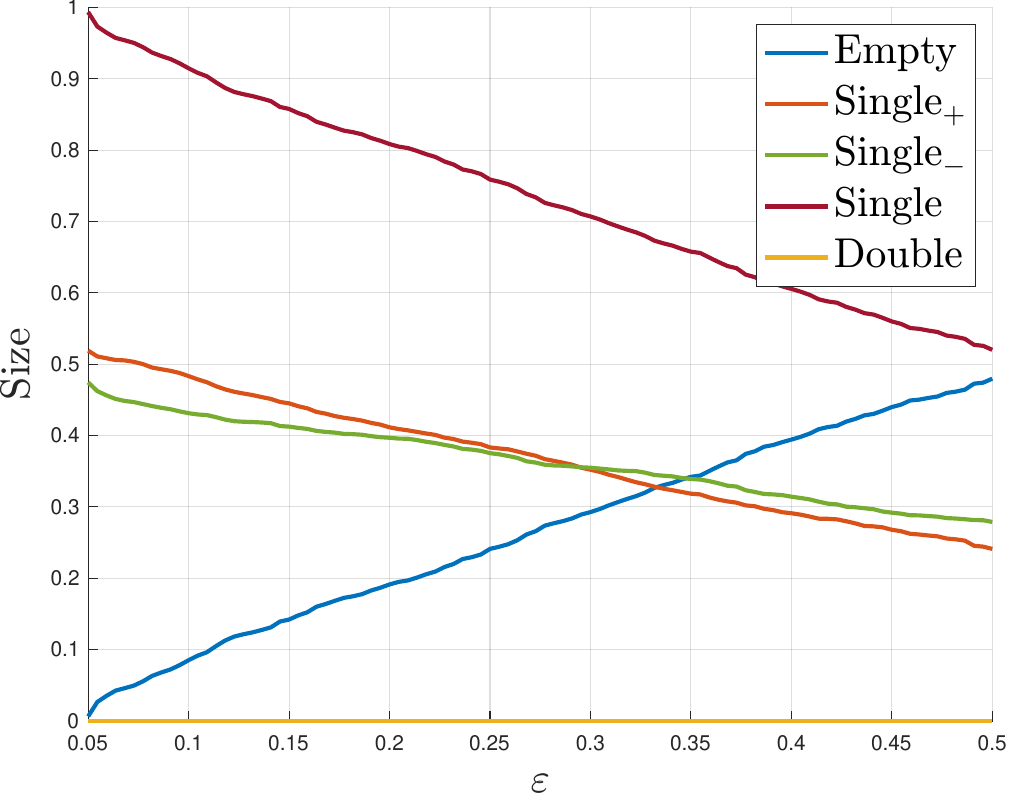} 
    \label{fig:ex5c}
  }
  \caption{Trend of the average size of conformal sets as $\varepsilon$ varies in $[0.05, 0.5]$ for different classifiers. \added{The size varies from 0 (empty) to 1 (full).}}
  \label{fig:example5}
\end{center}
\end{figure*}  

Conformal predictions assess the goodness of an algorithm by two basic metrics of evaluation: accuracy and efficiency. Accuracy is measured by the average error, over the test set, of the conformal prediction sets considering points of both classes (\emph{err}), only class $y=-1$ points ($\textrm{\emph{err}}_-$) and only class $y=+1$ points ($\textrm{\emph{err}}_+$). We remind that an error occurs whenever the true label is not contained in the prediction set. Efficiency is  quantified through the rate of test points prediction sets with no predictions (\emph{empty}), two predictions (\emph{double}) and  singleton predictions (\emph{single}), these ones also divided by class ($\emph{single}_-$ and $\emph{single}_+$).
The obtained results (as the classifier varies) are reported in Figures \ref{fig:example4} and \ref{fig:example5} for\deleted{, respectively,} accuracy and efficiency\added{, respectively}.\\
The overall metrics computed on the benchmark dataset outline the expected behavior of the conformal prediction, with slight differences between the example classifiers.
For all values of $\varepsilon$, the average error is indeed bounded by $\varepsilon$ in all cases. Also, \emph{err} increases linearly with $\varepsilon$. \emph{This means that the classification is reliable}.
As for the size of the conformal set, \deleted{results in their overall}\added{the overall results} point out that for small values of $\varepsilon$ the model produces more double-sized regions, since in this way it would be ``almost certain'' that the true label is contained in the conformal set. Then, the size reduces by increasing $\varepsilon$, allowing \added{for} the presence of more empty prediction sets. The number of singleton conformal set remains always sufficiently high (it increases as double conformal sets decrease and it decreases as  empty conformal set increase) meaning that \emph{the classification is efficient}.
Regarding the use of the example classifiers, it is interesting to note that LR is the most stable with respect to $\varepsilon$ and the error conditional on classes: the error rate for both classes is nearly linear with $\varepsilon$, suggesting that the prediction is reliable even conditional on the single class or, better, that the classifier is able to clearly separate the classes while maintaining the expected confidence. The same behavior is also observed for SVM, although the errors per class deviate more from the average error. The error for class ``tunnel'' is always lower than that for class ``no tunnel'', suggesting that the classifier is more likely to minimize the number of false positives, losing in accuracy for true negatives. The opposite behavior is observed for SVDD, which instead tries to classify negative instances better, resulting in a lower expected classification error for class ``no tunnel''. The most interesting aspect, however, is that the algorithm is less conformal when conditioned on the error of the single class, increasing the spread with respect to the average error as $\varepsilon$ increases. \added{Conformal prediction together with scalable classifiers define then a totally new framework to deal with uncertainty quantification in classification based scenarios. The results shown in this application drastically overcome the ones obtained on the same dataset in \cite{9594676}. The previous approach relied on an iterative procedure to control the number of misclassified points that could only be used with a specific algorithm (SVDD) and without a-priori confidence bounds, but only on the basis of an smart trial-and-error algorithm. The point that the reader should observe is precisely this: the presented theory allows dealing with the uncertainty naturally brought by machine learning approaches in a simple and probabilistically grounded way, allowing setting confidence in prediction by design.  }
Finally, Figure \ref{fig:example6} shows the behavior of the coverage erro\added{r} of the CSR with respect to the example classifiers. As stated in Theorem \ref{conjecture}\added{,} the probability that the wrong label $-1$ is predicted for the points belonging to $\mathcal{S}_\varepsilon$ is under-linear with respect the expected error $\varepsilon$. 

\begin{figure*}[!t]
\begin{center}
\subfigure[Error coverage SVM]{%
    \includegraphics[width=0.33\textwidth]{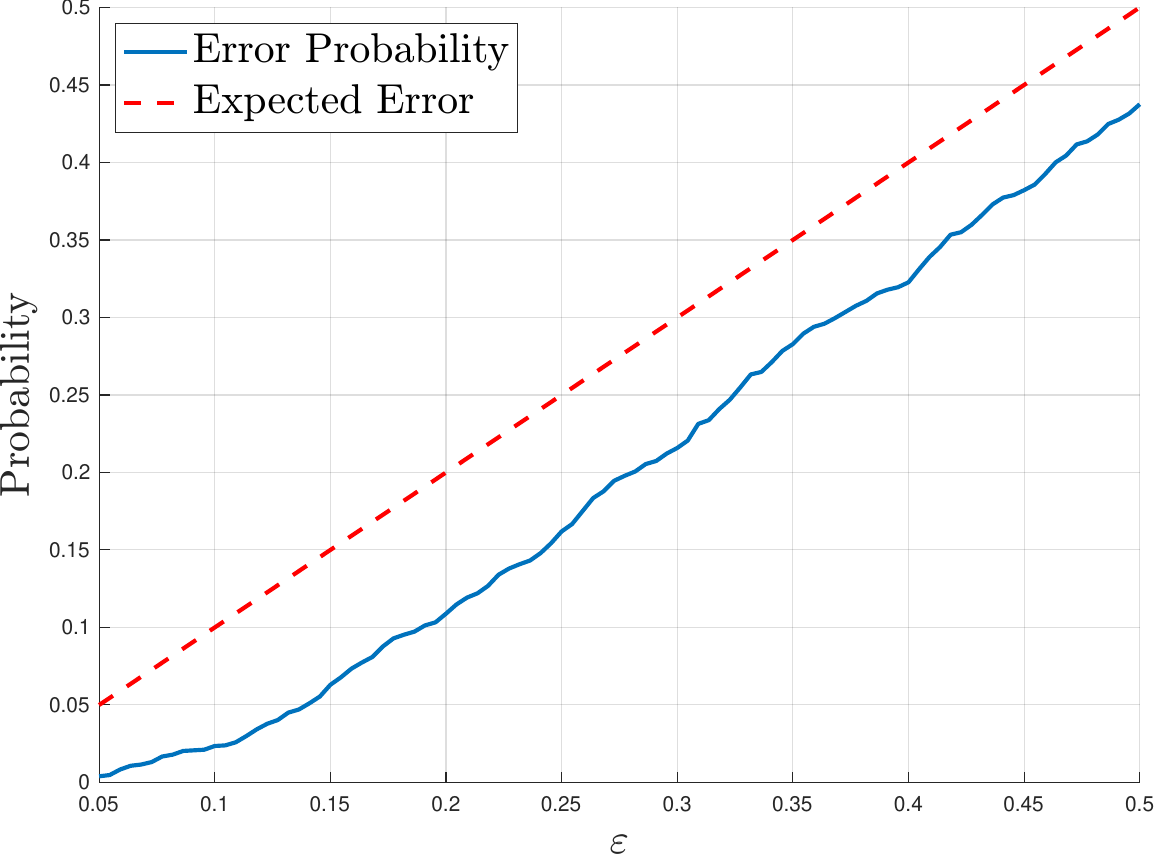} 
    \label{fig:ex6a}
  }
  \hfill
  \subfigure[Error coverage SVDD]{%
    \includegraphics[width=0.31\textwidth]{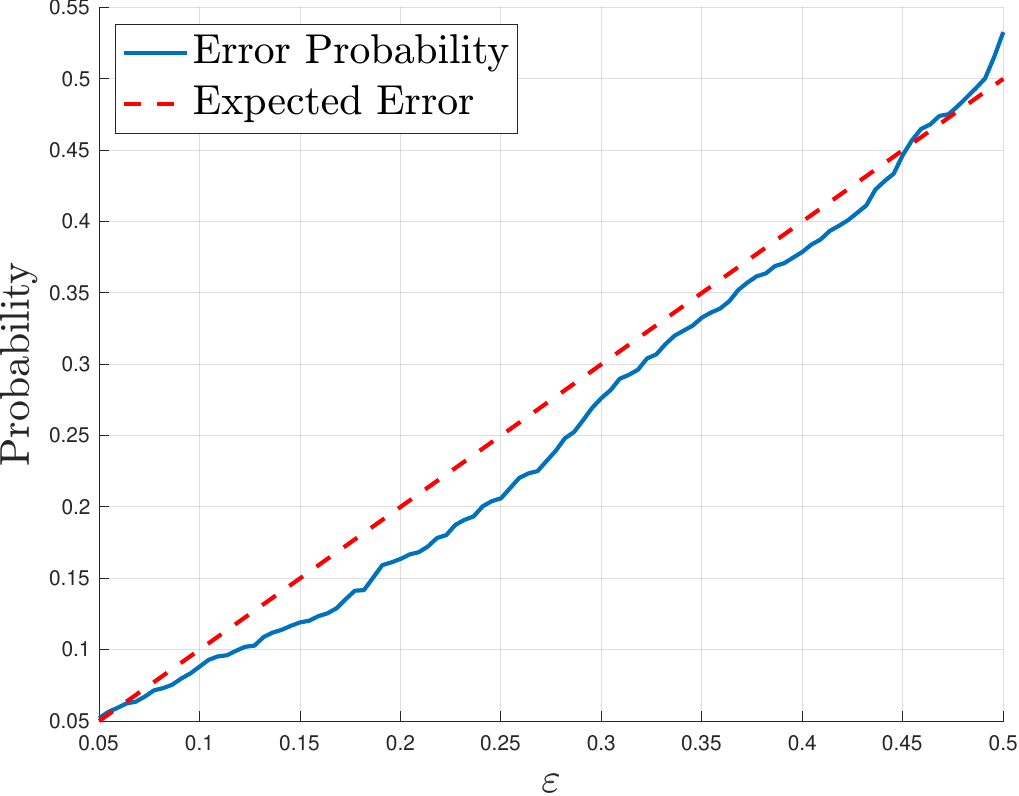} 
    \label{fig:ex6b}
  }
  \hfill
  \subfigure[Error coverage LR]{%
    \includegraphics[width=0.31\textwidth]{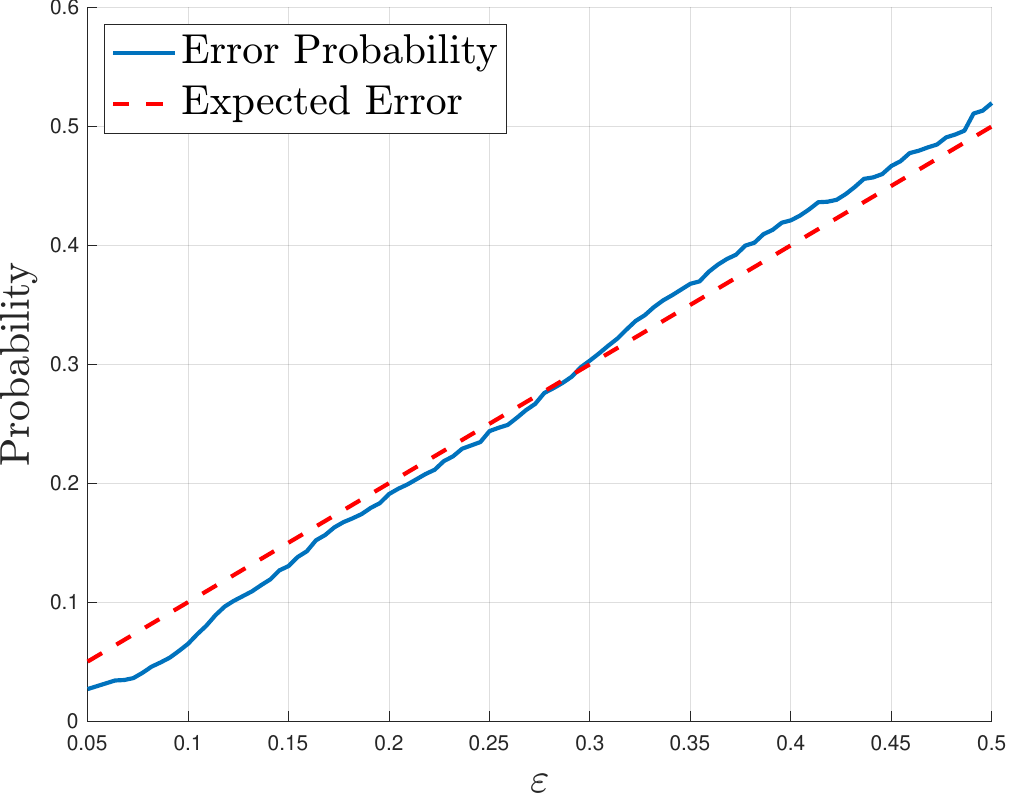} 
    \label{fig:ex6c}
  }
  \caption{Error coverage plot as $\varepsilon$ varies \added{in $[0.05, 0.5]$} for the example classifiers. \added{The probability varies in $[0, 0.5]$ for SVM, $[0.05, 0.55]$ for SVDD and $[0,0.6]$ for LR.}}
  \label{fig:example6}
\end{center}
\end{figure*}  

\section{Conclusions}

Scalable classifiers allow for the development of new techniques to assess safety and robustness in classification problems. With this research we explored the similarities between scalable classifiers and conformal prediction. Through the definition of a score function that naturally derives from the scalable classifier, it is possible to define the concept of conformal safety region, a region that possesses a crucial property known as error coverage, which implies that the probability of observing the wrong label for data points within this region is guaranteed to be no more than a predefined confidence error of $\varepsilon$. Moreover, ongoing studies on the conformal coverage (that is\added{,} the probability of observing the true safe label in the CSR is no less than $1-\varepsilon$) suggest that a mathematical proof for this property is conceivable. The idea is to exploit the results on class-conditional conformal prediction as in \cite{inductive}. \added{In addition, future work will include the possibility of extending the formulation of scalable classifiers, and thus the conformal safety region, to the multi-class and multi-label context.}

The exploration of conformal and error \deleted{coverage}\added{coverages} introduces a novel and meaningful concept that holds great promise for applications in the field of reliable and trustworthy artificial intelligence. It has the potential to enhance the assessment of safety and robustness, contributing to the advancement of AI systems that can be trusted and relied upon in critical applications.

\backmatter







\section*{Declarations}


\begin{itemize}
\item Funding: This work was supported in part by REXASI-PRO H-EU project, call HORIZON-CL4-2021-HUMAN-01-01, Grant agreement ID: 101070028. The work was also supported by Future Artificial Intelligence Research (FAIR) project, Italian Recovery and Resilience Plan (PNRR), Spoke 3 - Resilient AI. Moreover T. Alamo acknowledges support from grant PID2022-142946NA-I00 funded by MCIN/AEI/ 10.13039/501100011033 and by ERDF, A way of making Europe.
\item Conflict of interest/Competing interests (check journal-specific guidelines for which heading to use): There are no either conflicts of interests or competing interests to declare.
\item Ethics approval: Not Applicable.
\item Consent to participate: Not Applicable.
\item Consent for publication: Not Applicable
\item Availability of data and materials: The datasets generated and analysed during the current study are not publicly available due to potentially sensitive data from the CNR-IEIIT's internal network but are available from the corresponding author on reasonable request.
\item Code availability: The authors prefer not to make the codes available yet. It can be requested from the corresponding author upon reasonable request.
\item Authors' contributions:
    \begin{itemize}
        \item Alberto Carlevaro: Methodology, Validation, Investigation, Software, Data curation, Writing.
        \item Teodoro Alamo: Methodology, Validation, Investigation, Writing, Supervision.
        \item Fabrizio Dabbene: Methodology, Validation, Investigation, Writing, Supervision.
        \item Maurizio Mongelli: Methodology, Validation, Investigation, Data curation, Writing, Supervision.
    \end{itemize}
\end{itemize}

\bibliography{sn-bibliography}

\end{document}